\theoremstyle{plain}
\newtheorem{theorem}{Theorem}[section]
\newtheorem{lemma}[theorem]{Lemma}
\newtheorem{corollary}[theorem]{Corollary}
\theoremstyle{definition}
\newtheorem{definition}[theorem]{Definition}
\newtheorem{assumption}[theorem]{Assumption}
\theoremstyle{remark}
\DeclareMathOperator{\Lp}{Lip}
\definecolor{wbydarkgreen}{RGB}{0, 100, 0}
\newmdenv[
backgroundcolor=gray!10, 
linecolor=black, 
roundcorner=10pt, 
frametitlebackgroundcolor=blue!20, 
frametitlefont=\sffamily\bfseries, 
frametitleaboveskip=10pt, 
innertopmargin=\topskip, 
skipabove=10pt, 
skipbelow=10pt, 
]{mybox}
\icmltitlerunning{LLM-Empowered State Representation for Reinforcement Learning}
\begin{document}

\twocolumn[
\icmltitle{LLM-Empowered State Representation for Reinforcement Learning}



\icmlsetsymbol{equal}{*}

\begin{icmlauthorlist}
\icmlauthor{Boyuan Wang}{equal,thu}
\icmlauthor{Yun Qu}{equal,thu}
\icmlauthor{Yuhang Jiang}{thu}
\icmlauthor{Jianzhun Shao}{thu}
\icmlauthor{Chang Liu}{thu}
\icmlauthor{Wenming Yang}{thu}
\icmlauthor{Xiangyang Ji}{thu} 
\end{icmlauthorlist}

\icmlaffiliation{thu}{Tsinghua University}
\icmlcorrespondingauthor{Xiangyang Ji}{xyji@tsinghua.edu.cn}

\icmlkeywords{Machine Learning, ICML}

\vskip 0.3in
]



\printAffiliationsAndNotice{\icmlEqualContribution} 
\begin{abstract}

Conventional state representations in reinforcement learning often omit critical task-related details, presenting a significant challenge for value networks in establishing accurate mappings from states to task rewards.
Traditional methods typically depend on extensive sample learning to enrich state representations with task-specific information, which leads to low sample efficiency and high time costs.  
Recently, surging knowledgeable large language models (LLM) have provided promising substitutes for prior injection with minimal human intervention.
Motivated by this, we propose LLM-Empowered State Representation~(LESR), a novel approach that utilizes LLM to autonomously generate task-related state representation codes which help to enhance the continuity of network mappings and facilitate efficient training.
Experimental results demonstrate LESR exhibits high sample efficiency and outperforms state-of-the-art baselines by an average of \textbf{29\%} in accumulated reward in Mujoco tasks and \textbf{30\%} in success rates in Gym-Robotics tasks. 
Codes of LESR are accessible at \url{https://github.com/thu-rllab/LESR}.

\end{abstract} 
\section{Introduction}\label{section:intro}

Traditional reinforcement learning (RL) algorithms~\cite{tesauro1995temporal, watkins1992q, rummery1994line} generally require a large number of samples to converge~\cite{maei2009convergent, ohnishi2019constrained}.
Compounding this challenge, in most cases, the intricate nature of RL tasks significantly hampers sample efficiency.~\cite{dulac2019challenges}. For handling complex tasks and enhancing generalization, neural networks are utilized to approximate value functions~\cite{mnih2015human, schulman2015high}.
However, value networks often lack smoothness even when they are trained to converge, leading to instability and low sample efficiency throughout the training process~\cite{asadi2018lipschitz}.
Considering that in deep RL, state vectors serve as the primary input to value networks, sub-optimal state representations can result in limited generalization capabilities and non-smoothness of value network mappings~\cite{schuck2018state, merckling2022exploratory}.

In most RL environments~\cite{todorov2012mujoco, brockman2016openai, de2023gymnasium}, source state representations typically embody environmental information but lack specific task-related details.
However, the learning of value networks depends on accurately capturing task dynamics through rewards~\cite{yang2020multi, yoo2022skills}.
The absence of task-related representations may impede the establishment of network mappings from states to rewards, affecting network continuity.
Previous researches leverage diverse transitions, intensifying the issue of time-consuming data collection and training~\cite{merckling2020state, sodhani2021multi, merckling2022exploratory}.
Consequently, a question arises regarding the existence of a more efficient method for identifying task-related state representations.

Indeed, incorporating task-specific information into the state representation can be achieved through introducing expert knowledge~\cite{niv2019learning, ota2020can}.
Recently, significant strides have been accomplished in the field of Large Language Models (LLM)~\cite{touvron2023llama, openai2023gpt4}.
The exceptional performance of LLM in various domains~\cite{shen2023pangu, miao2023selfcheck, liang2023encouraging, madaan2023self, stremmel2023xaiqa}, particularly in sequential decision-making tasks, showcases their extensive knowledge and sufficient reasoning abilities~\cite{wang2023voyager, wang2023describe, feng2023llama, wu2023spring, du2023guiding, shukla2023lgts, sun2023prompt, zhang2023rladapter}.
This motivates the idea that the prior knowledge embedded in LLM can be exploited to enhance the generation of task-related state representations.

\begin{figure*}[t]
	\centering
	
	\begin{subfigure}{0.22\linewidth}
		\centering
		\includegraphics[width=\linewidth]{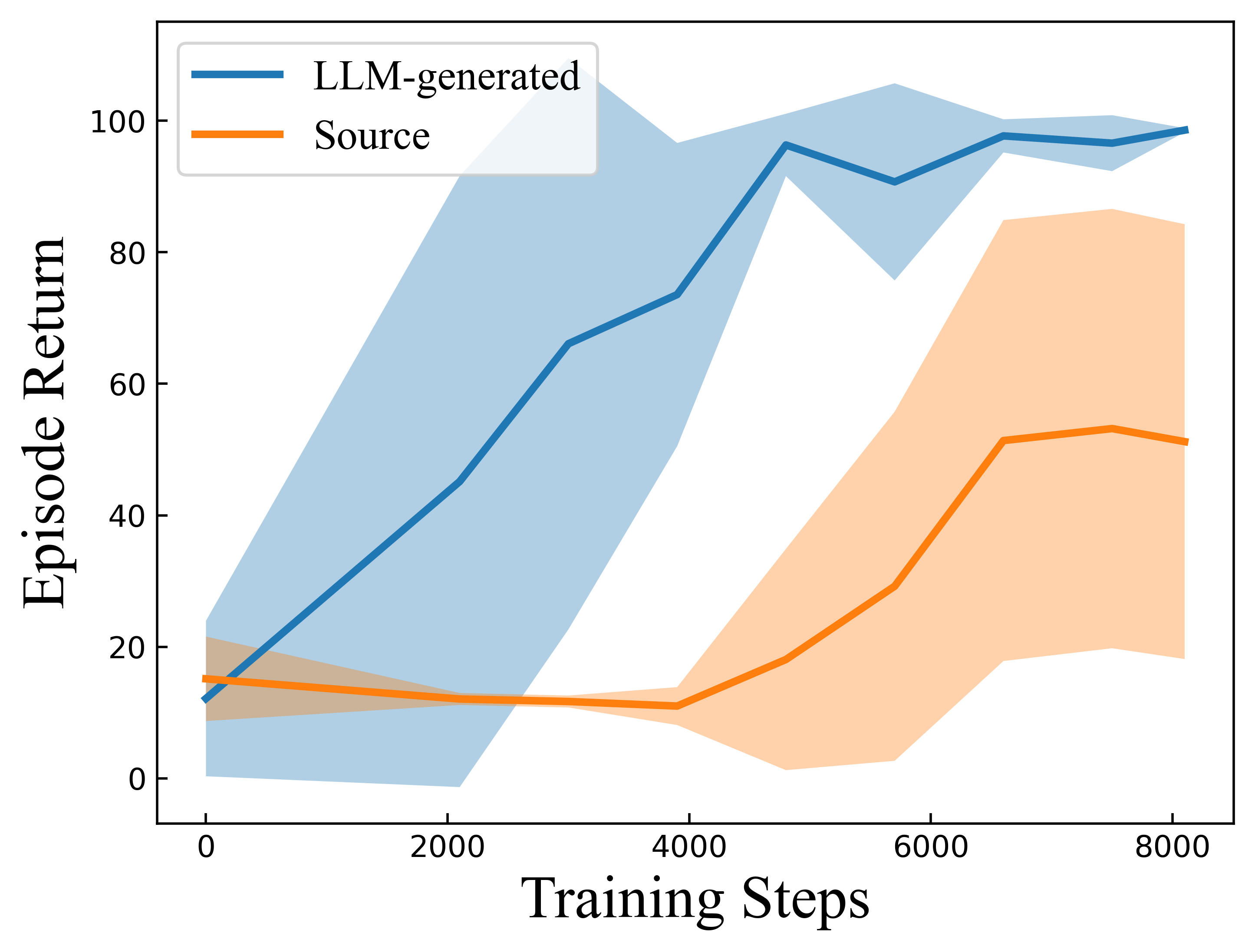}
		\captionsetup{font=footnotesize}
		\caption{\ \ Training Curves}
		\label{fig:demo-a}
	\end{subfigure}
	\hspace{10pt}
	\hspace{-10pt}
	\begin{subfigure}{0.4\linewidth}
		\centering
		\includegraphics[height=.4\linewidth]{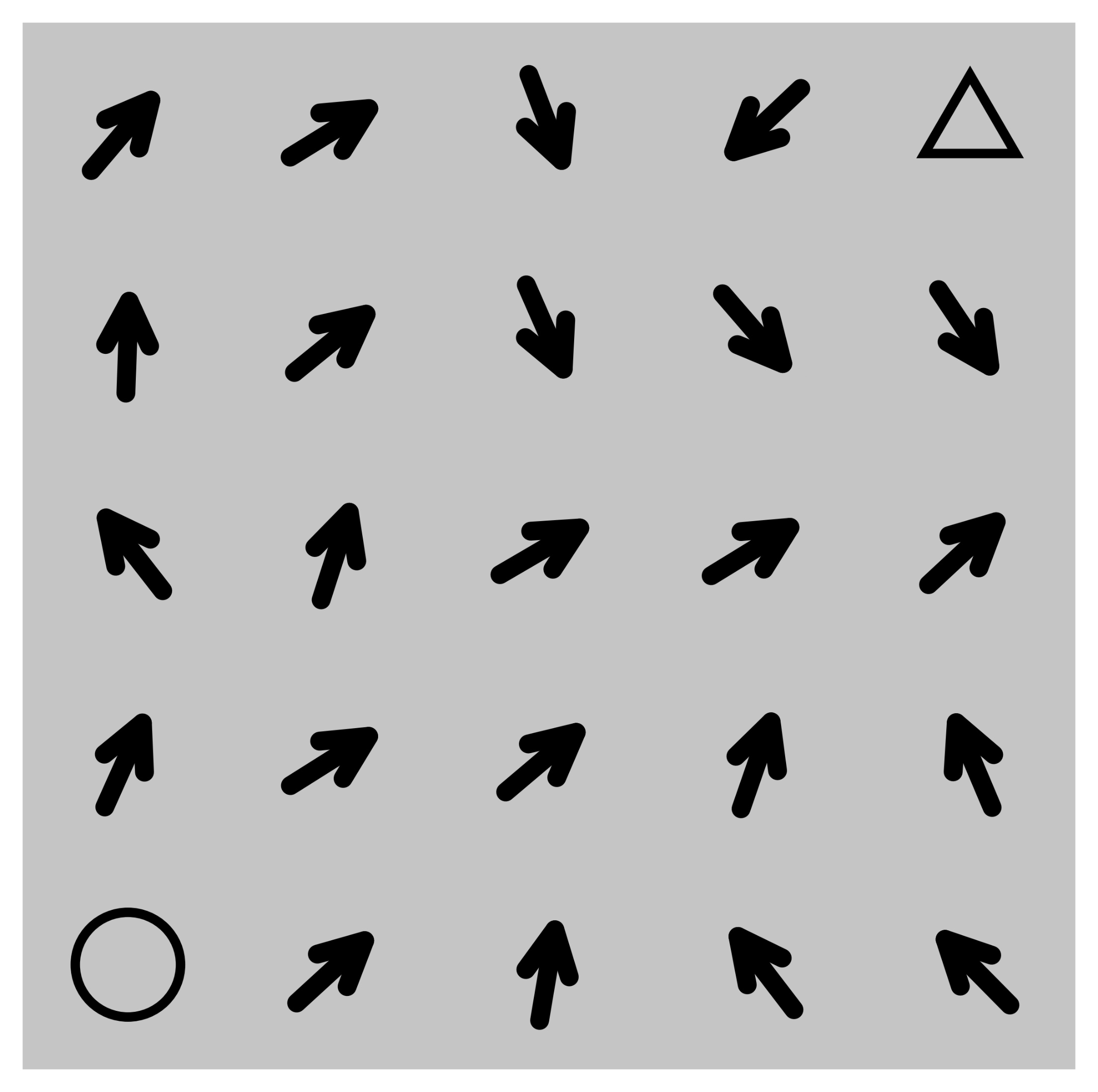}
		\centering
		\includegraphics[height=.4\linewidth]{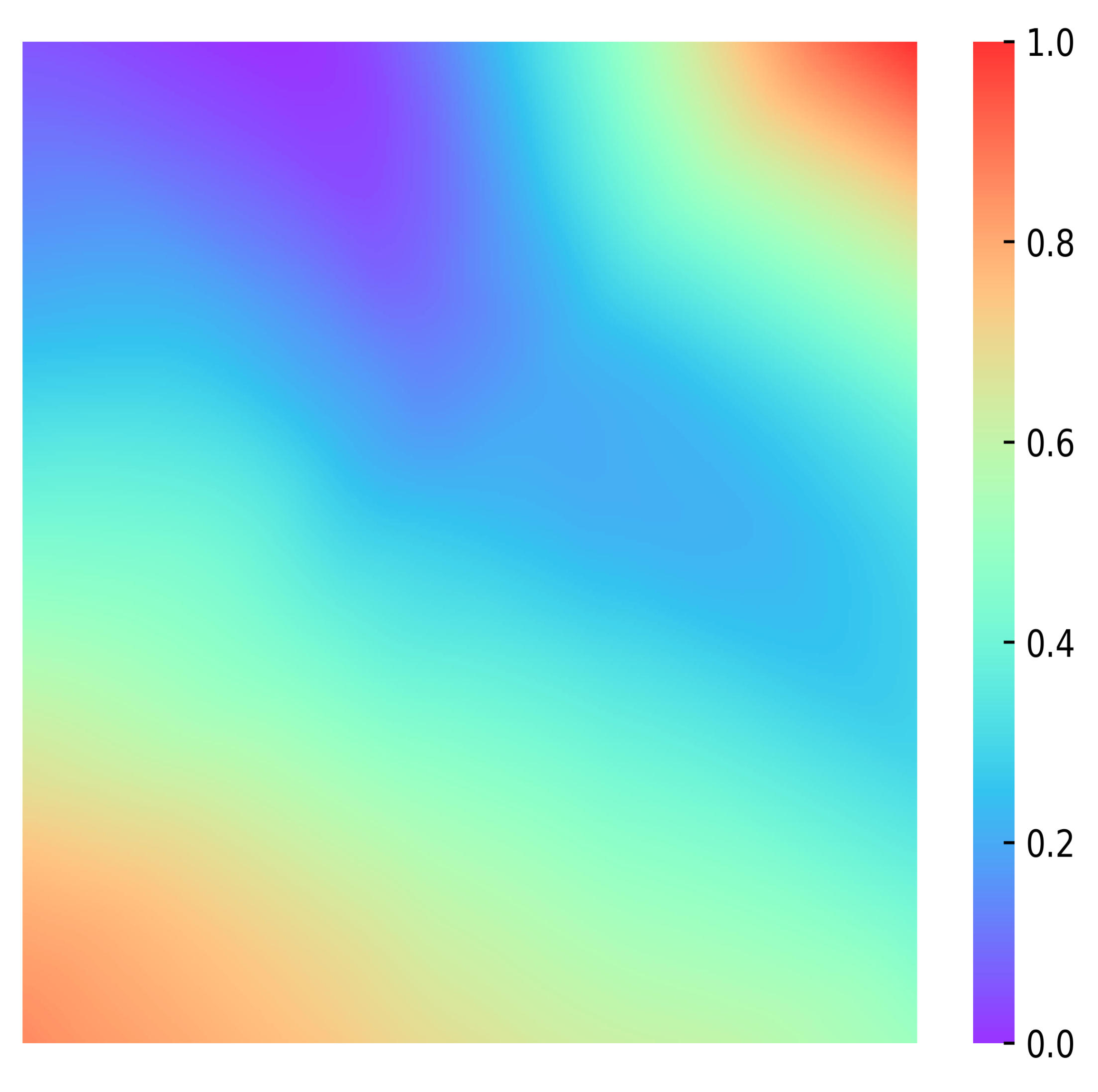}
		\vspace{5pt}
		\captionsetup{font=footnotesize}
		\caption{Source, $\Lp(Q) = 37.47$}
		\label{fig:demo-b}
	\end{subfigure}
	\hspace{-14pt}
	\hspace{-10pt}
	\begin{subfigure}{0.4\linewidth}
		\centering
		\includegraphics[height=.4\linewidth]{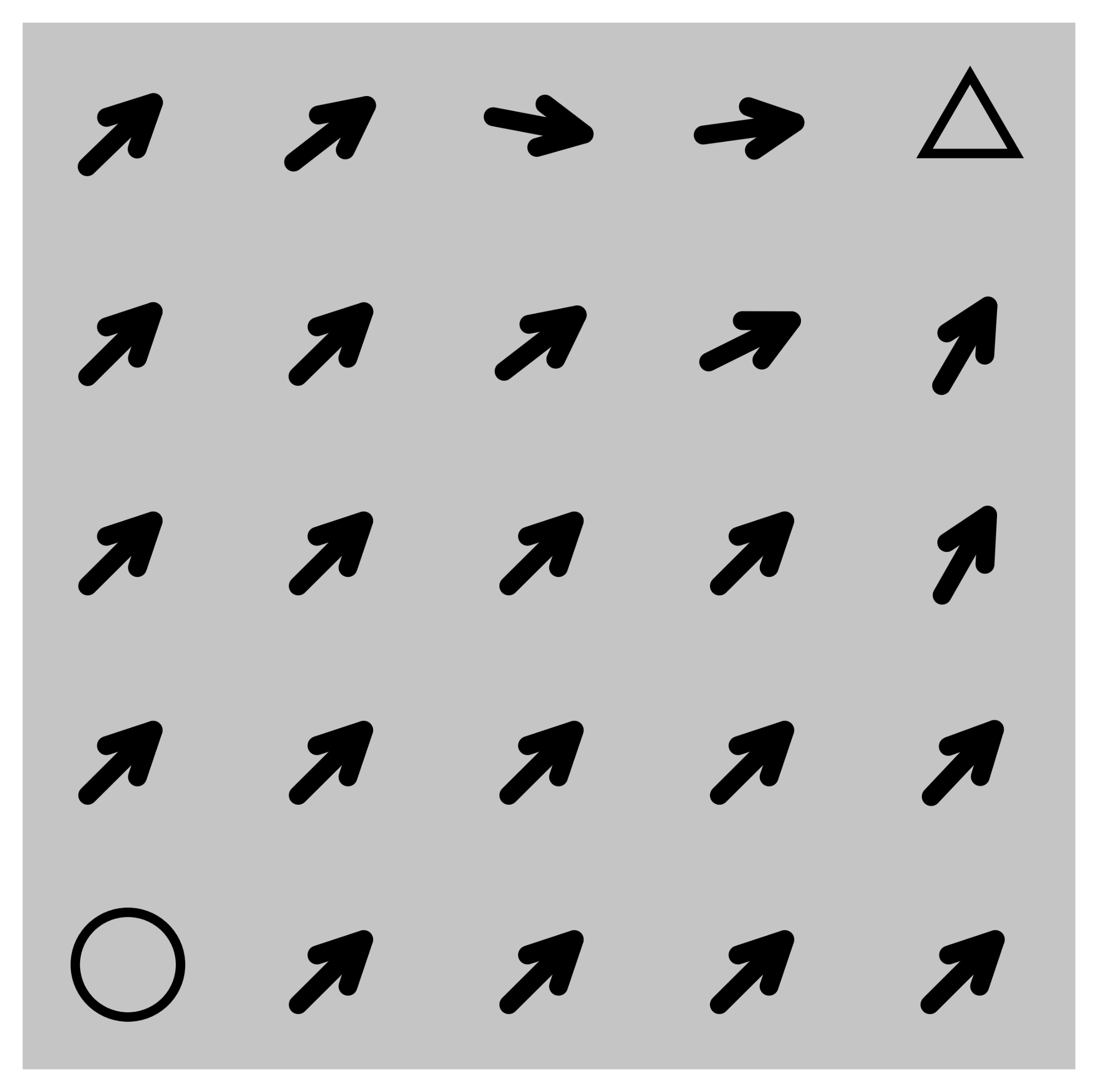}
		\centering
		\includegraphics[height=.4\linewidth]{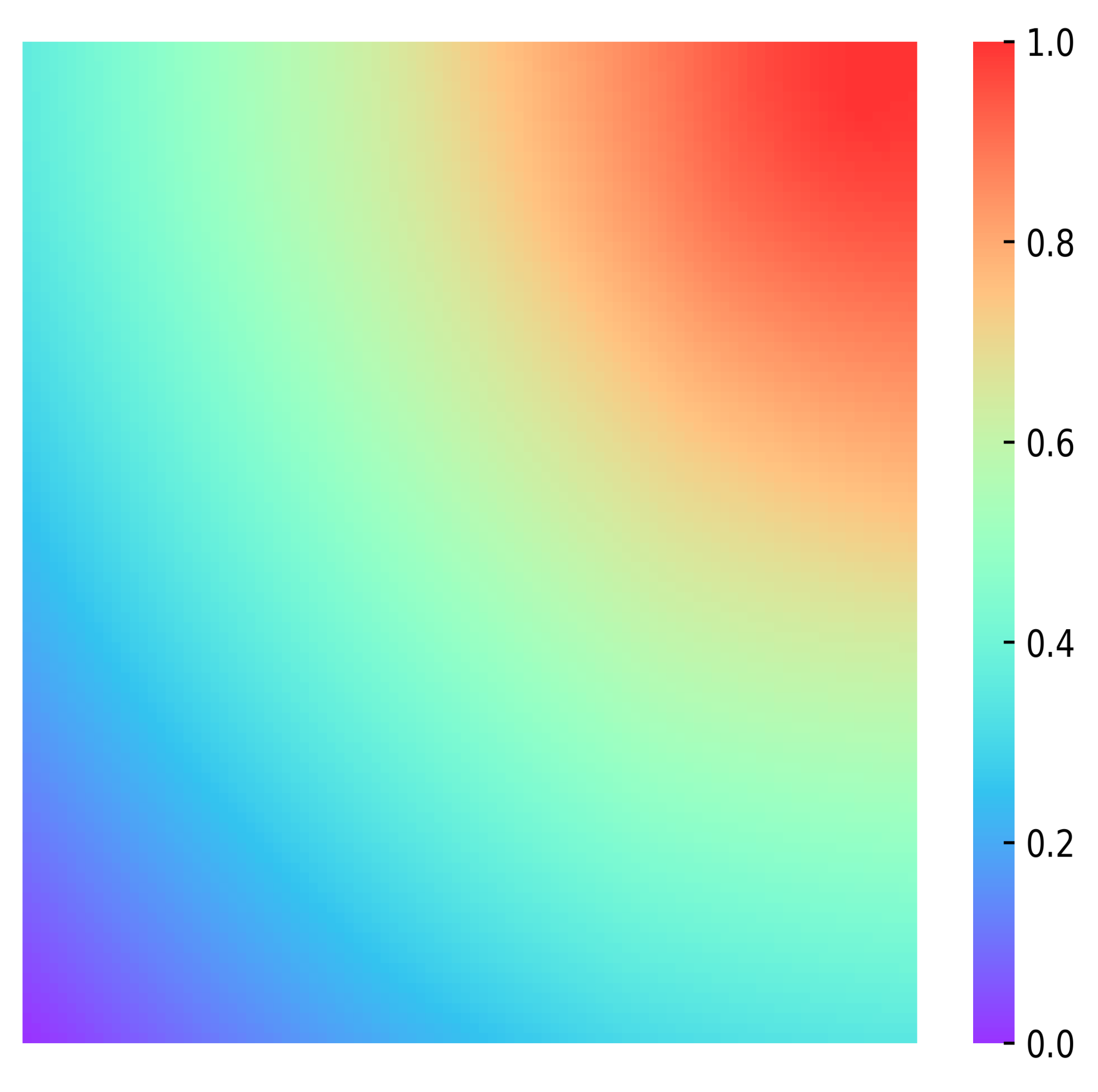}
		\vspace{5pt}
		\captionsetup{font=footnotesize}
		\caption{LLM-generated, $\Lp(Q) = 3.15$}
		\label{fig:demo-c}
	\end{subfigure}
	
	\caption{Experiments conducted in the \texttt{PointMaze} in Gym-Robotics~\cite{de2023gymnasium}. The agent aims to navigate from the bottom-left (depicted by `$\bigcirc$') to the top-right target (depicted by `$\triangle$'). (a) The training curves of policies in two different state representations: (1) Source: which only involves the coordinates of the agent and the target; (2) LLM-Generated: which adds another dimension indicating the distance from the agent to the target.
	(b) (c): Visualizations of final policies and learned state values. Arrows show actions by final policies. Heatmaps display learned state values after just 500 training steps, and the smoother result of LLM-generated shows higher sample efficiency.
	The Lipschitz constant Lip($Q$) is defined in Definition~\ref{defi:constant}.
	Further details are provided in Appendix~\ref{appendix:toy_example}.}
	\label{fig:demo}
\end{figure*} 

To preliminarily validate LLM's capacity in enhancing state representations, an illustrative toy example is provided. In Figure~\ref{fig:demo}, results demonstrated that the state representation generated by LLM can help to enhance the continuity of value networks and expedite the convergence of policy learning.
To substantiate this, we employ the Lipschitz constant~\cite{jones1993lipschitzian} for smoothness assessment.
The LLM-generated state representations enhance the Lipschitz continuity of value networks, accounting for the improvement of sample efficiency and performance.

In this paper, we propose a novel method named \textbf{L}LM-\textbf{E}mpowered \textbf{S}tate \textbf{R}epresentation (\textbf{LESR}).
We utilize LLM's coding proficiency and interpretive capacity for physical mechanisms to generate task-related state representation function codes. LLM is then employed to formulate an intrinsic reward function based on these generated state representations. A feedback mechanism is devised to iteratively refine both the state representation and intrinsic reward functions. In the proposed algorithm, LLM consultation takes place only at the beginning of each iteration. Throughout both training and testing stages, LLM is entirely omitted, ensuring significant time savings and flexibility.

In summary, our main contributions are:
\begin{itemize}
	\item We propose a novel method employing LLM to generate task-related state representations accompanied by intrinsic reward functions for RL. These functions are demonstrated to exhibit robustness when transferred to various underlying RL algorithms.
	\item We have theoretically demonstrated that enhancing Lipschitz continuity improves the convergence of the value networks and empirically validated more task-related state representations can enhance Lipschitz continuity. 
	\item LESR is a general framework that accommodates both continuous and discontinuous reward scenarios. Experimental results demonstrate that LESR significantly surpass state-of-the-art baselines by an average improvement of \textbf{29\%} in Mujoco tasks and \textbf{30\%} in Gym-Robotics tasks. We have also experimentally validated LESR's adaptability to novel tasks.
\end{itemize}
\section{Related Work}

\textbf{Incorporating LLM within RL Architecture} \quad Since the advent of LLM, researchers have endeavored to harness the extensive common-sense knowledge and efficient reasoning abilities inherent in LLMs within the context of RL environments. Challenges have arisen due to the misalignment between the high-level language outputs of LLMs and the low-level executable actions within the environment. To address this, \citet{qiu2023embodied, agashe2023evaluating, yuan2023plan4mc, wang2023voyager, wang2023describe, feng2023llama, wu2023spring} have sought to employ environments where observation and action spaces can be readily translated into natural language~\cite{carta2023grounding, puig2018virtualhome}. Alternative approaches employ language models as the policy network with fine-tuning~\cite{zhang2023rladapter, li2022pre, shi2023unleashing, yan2023ask, carta2023grounding}. Meanwhile, other endeavors focus on leveraging LLMs as high-level planners, generating sub-goals for RL agents~\cite{sun2023prompt, shukla2023lgts, zhang2023bootstrap}. Nevertheless, these works encounter a common challenge: the tight coupling of LLMs with RL agents, leading to frequent communication between the two even during the testing stage, a process that proves time-consuming and inefficient.

\begin{figure*}[t]
	\centering
	\includegraphics[width=\textwidth]{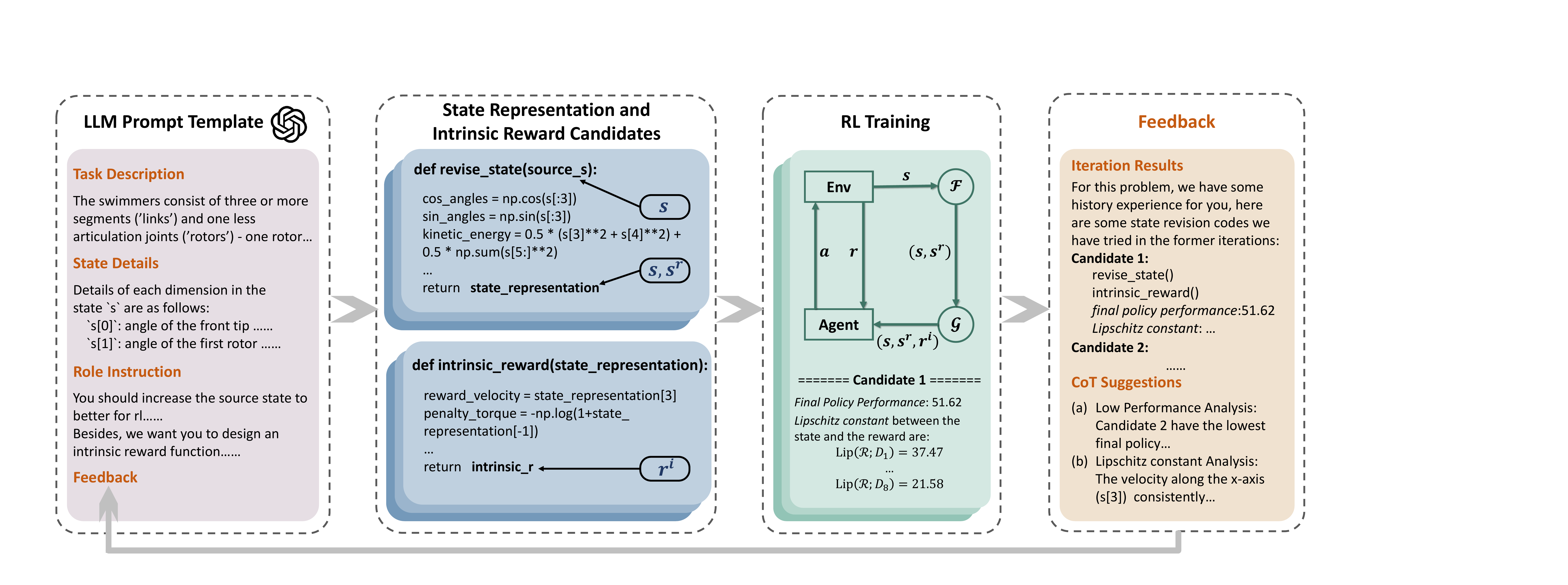}
	\caption{LESR Framework: (1) LLM is prompted to generate codes for state representation and intrinsic reward functions. Refer to Appendix~\ref{appendix:prompts} for details on all prompt templates. (2) $K$ state representations and intrinsic rewards $\{\mathcal{F}_k\}_{k=1}^K, \{\mathcal{G}_k\}_{k=1}^K$ are sampled from LLM. (3) During RL training, function $\mathcal{F}$ and $\mathcal{G}$ are utilized to generate $s^r = \mathcal{F}(s)$ for state representations, and $r^i = \mathcal{G}(s, s^r)$ for intrinsic rewards. (4) Finally, Lipschitz constants and episode returns of each candidate serve as feedback metrics for LLM.}
	\label{fig2:our_framework}
\end{figure*}

\textbf{State Representation Derived from LLM} \quad 
Some researchers use LLM for state representations that contain more information in partially observable scenarios. \citet{da2023llm} employs LLMs to provide extra details about road conditions in traffic control areas. Similarly, \citet{shek2023lancar} focuses on robot locomotion in unstructured environments, using LLM as translators to extract environmental properties and generate contextual embeddings for training. \citet{chen2023llmstate}, tracks key objects and attributes in open-world household environments via LLM, expanding and updating object attributes based on historical trajectory information. These methods mainly addressed the issue of missing state information in partially observable scenarios. In contrast, our method emphasizes exploring correlations among internal state features, recognizing meaningful correlations reflecting underlying physical relationships and generating more task-related representations. 

\textbf{Reward Design via LLM} \quad 
For the purpose of effectively bridging the gap between high-level language instructions and low-level robot actions, some researchers employ rewards as the intermediate interface generated by LLM. Works in this domain can be categorized into the following three types: (1) Sparse Reward: \citet{kwon2023reward, yu2023language, sontakke2023roboclip}, aiming to design sparse rewards at the trajectory level. (2) Dense Reward: \citet{song2023self, xie2023text2reward, ma2023eureka, rocamonde2023visionlanguage}, aiming to design dense rewards for every interactive step of the agent. (3) Intrinsic Reward: \citet{klissarov2023motif, triantafyllidis2023intrinsic}, aiming to design intrinsic rewards for reducing ineffective exploration and improving sample efficiency. In this paper, we also utilize LLM to generate intrinsic reward function codes. The primary difference between our methodology and prior research lies in that our reward design serves as an auxiliary mechanism for encouraging the agent to better comprehend the state representations generated by LLM. This aids the policy and critic networks in establishing correlations between the state representations and intrinsic rewards.

\section{Method}

\subsection{Problem Statement}

We consider a Markov decision process~\cite{puterman1990markov} defined by a tuple ($\mathcal{S}, \mathcal{A}, \mathcal{R}, \mathcal{P}, p_0, \gamma$), where $\mathcal{S}$ denotes the source state space and $\mathcal{A}$ denotes the action space.
Given a specific task, $\mathcal{R}$ is the source extrinsic reward function of the environment. $\mathcal{P}(s'|s, a)$ denotes the dynamic transition function, $p_0$ is the initial state distribution, and $\gamma$ is the discount factor.
The primary objective is to learn a RL policy $\pi(a|s)$ that maximizes the cumulative rewards expectation, which is defined as value function $Q_\pi(s_t, a_t) = \mathbb{E}_\pi \Big[ \sum_{t=0}^{\infty} \gamma^t r_t \Big|s_t, a_t  \Big]$.

In order to assess the impact of LESR on network continuity, we introduce the Lipschitz constant~\cite{jones1993lipschitzian}:
\begin{definition}\label{defi:constant}
	Denote the data space as $\mathcal{X} \subset \mathbb{R}^d$ and the label space as $\mathcal{Y} \subset \mathbb{R}$.
	Consider a dataset $\mathcal{X}_0 \subset \mathcal{X}$, and the label $\mathcal{Y}_0 = \{y_i | y_i=u(x_i), \text{where } x_i \in \mathcal{X}_0\} \subset \mathcal{Y}$.
	Here, $x_i$ represents a sequence of i.i.d. random variables on $\mathcal{X}$ sampled from the probability distribution $\rho$, and $u:\mathcal{X}_0 \subset \mathcal{X}\rightarrow \mathcal{Y}$ is the Lipschitz constant of a mapping given by
	\begin{align}
		\Lp(u;\mathcal{X}_0) = \sup_{x_1,x_2 \in \mathcal{X}_0}   \frac{\|u(x_1)-u(x_2)\|_2}{\|x_1 - x_2\|_2}.
	\end{align}
	When $\mathcal{X}_0$ is all of $\mathcal{X}$, we write $\Lp(u;\mathcal{X}) = \Lp(u)$.
	A lower Lipschitz constant indicates a smoother mapping $u$.
\end{definition}

\subsection{LLM-Empowered State Representation}\label{sec:state_feature_expansion}

In many RL settings~\cite{todorov2012mujoco, brockman2016openai, de2023gymnasium}, source state representations usually contain general environmental information, while often lacking specific details related to current tasks which is critical to the training of the value networks~\cite{yang2020multi, yoo2022skills}.
The absence of task-related representations may hinder network mappings from states to rewards, impacting the continuity of networks.
Recognizing this limitation, the identification and incorporation of additional task-related state representations emerge as a pivotal strategy. This strategic augmentation can expedite the establishment of network mappings, subsequently boosting the smoothness of networks and augmenting training efficiency.

Due to the extensive knowledge and priors embedded in LLM, utilizing it for generating task-related state representations can be promising.
In this context, we present a direct and efficient method named \textbf{L}LM-\textbf{E}mpowered \textbf{S}tate \textbf{R}epresentation (\textbf{LESR}).
The whole framework is depicted in Figure~\ref{fig2:our_framework}. 
Our methodology hinges on leveraging LLM to facilitate the generation of more task-specific state representations.
Herein, we denote LLM as $\mathcal{M}$ and a descriptor translating symbolic information into natural language as $d$. Consequently, the input to LLM $\mathcal{M}$ is represented as $d(\mathcal{S})$, which constitutes four parts: (1) \textit{Task Description}: information about the RL environment and the specific task. (2) \textit{State Details}: information pertaining to each dimension of the source state. (3) \textit{Role Instruction}: assignments that require LLM to generate task-related state representation and intrinsic reward codes. (4) \textit{Feedback}: historical information from previous iterations.
For a full comprehensive descriptions of our prompts, refer to Appendix~\ref{appendix:prompts}. 

Our primary objective is to harness LLM $\mathcal{M}$ to formulate a python function $\mathcal{F}: \mathcal{S}\rightarrow\mathcal{S}^r$, where $\mathcal{S}^r$ denotes the LLM-empowered state representation space. $\mathcal{F}$ is sampled from $\mathcal{M}\Big( d(\mathcal{S}) \Big)$ and $d(\mathcal{S})$ explicitly embeds the task information into LLM. 
The state representation function $\mathcal{F}$ utilizes source state dimensions for calculations, generating task-specific state dimensions.
At each timestep $t$, when the agent get the current state $s_t$ from the environment, the corresponding state representation $s^r_t = \mathcal{F}(s_t)$ will be concatenated to the source state as the input for the policy $\pi(a_t | s_t, s^r_t)$ and value $Q(s_t, s^r_t, a_t)$.  

Once the state representation function $\mathcal{F}$ is obtained, LLM $\mathcal{M}$ is subsequently required to provide an intrinsic reward function $\mathcal{G}: \mathcal{S}^c\rightarrow\mathbb{R}$ in python code format based on $s_t^c = (s_t, s_t^r)\in \mathcal{S}^c$, where $\mathcal{S}^c = \mathcal{S} \times \mathcal{S}^r$ is the joint state space.
More precisely, we stipulate in the prompt that LLM is obliged to incorporate the LLM-empowered state representations $s_t^r$ to calculate the intrinsic rewards, and it also retains the option to incorporate the source state $s_t$ for a better intrinsic reward design.
We formulate the joint optimization objective as:
\begin{equation}
	\begin{aligned}
		\max\limits_{\mathcal{F}, \mathcal{G}}\max\limits_{\pi} \space &\mathbb{E}_{\mathcal{F}, \mathcal{G}, \pi} \Big[ \sum_{t=0}^{\infty} \gamma^t \Big( r + w \cdot r^i\Big) \Big|
		r^i = \mathcal{G}\Big(s_t, \mathcal{F}(s_t)\Big)  \Big].
	\end{aligned} 
	\label{eq:max_obj_2}
\end{equation} 
where $\mathcal{F}, \mathcal{G} \sim\mathcal{M}\Big(d\left(\mathcal{S}\right)\Big)$, and $w$ is the weight of the intrinsic reward.

\subsection{Lipschitz Constant for Feedback} \label{sec:lip_feedback}

In practice, to enhance the robustness of state representations, we iteratively query LLM multiple times, incorporating previous training results as feedback.
During each training iteration, we sample $K$ state representation and intrinsic reward function codes $\mathcal{F}_k, \mathcal{G}_k, k=1, \dots, K$ from LLM $\mathcal{M}$.
Subsequently, we concurrently execute $K$ training processes over $N_{small}$ training timesteps to evaluate the performance of each function $\mathcal{F}_k, \mathcal{G}_k$. Here, $N_{small}$ is intentionally set to be smaller than the total timesteps $N$ employed in the final evaluation stage. 

\textbf{$\bullet$ Continuous Scenarios} For scenarios with continuous extrinsic reward, we maintain a Lipschitz constant array $C_k\in \mathbb{R}^{|\mathcal{S}^c|}$ for each of the $K$ training instances.
Each element of $C_k$ signifies the Lipschitz constant of a mapping $u_i, i=1, \dots, |\mathcal{S}^c|$, which maps each dimension of $\mathcal{S}^c$ to the extrinsic rewards. Note: $u_i$ is introduced to signify the Lipschitz constant computed independently for each state dimension concerning the extrinsic reward. This assessment is crucial for guiding the LLM in identifying and eliminating undesired dimensions within the state representation. 
Given a trajectory $T=\{s_t^c, r_t\}_{t=1}^{H}$ of length $H$, the current Lipschitz constant array is calculated as follows: 
\begin{equation}
	\begin{aligned}
		C_k^{T} = \Bigg[ \Lp(u_i;T_i) \Bigg]_{i=1}^{|\mathcal{S}^c|},
	\end{aligned} 
	\label{eq:corr_1}
\end{equation}
where $T_i = \{s^c_t[i], r_t\}_{t=1}^H$, $s^c_t[i]$ denotes the $i$-th dimension of the joint state representations $s^c$, and $C_k^{T}$ denotes the Lipschitz constant array of the current trajectory. we soft-update $C_k$ over trajectories:
\begin{equation}
	\begin{aligned}
		C_k = \tau C_k + (1 - \tau) C_k^{T},
	\end{aligned} 
	\label{eq:corr_2}
\end{equation}
where $\tau \in [0, 1]$ is the soft-update weight.
At the end of each training iteration, the Lipschitz constant array $C_k$ and policy performance $\nu_k$ are provided to LLM for CoT~\cite{wei2022chain} suggestions, which, along with the training results, serve as feedback for subsequent iterations.
The feedback information helps LLM to generate task-related state representations that exhibit a lower Lipschitz constant with extrinsic rewards, as elaborated in Section~\ref{sec:theory} where we discuss the theoretical advantages of a lower Lipschitz constant for network convergence.
In the subsequent iterations, LLM leverages all historical feedback to iteratively refine and generate improved state representation and intrinsic reward function codes $\{\mathcal{F}_k\}_{k=1}^K, \{\mathcal{G}_k\}_{k=1}^K$. 
The whole algorithm is summarized in Algorithm~\ref{alg:ours}.  

\begin{algorithm}[t]
	\caption{LLM-Empowered State Representation}\label{alg:ours}
	\footnotesize
	\vspace{0.5em}
	\textbf{Input}:  Policy network $\pi_\theta$, value network $Q_\phi$, LLM $\mathcal{M}$, training timesteps $N_{small}$, total final evaluation timesteps $N$, sample count $K$, iteration count $I$, initial prompt $l_1 = d(\mathcal{S})$, feedback analysis prompt $l_{analysis}$.\\
	\textbf{Output}: learned policy network $\pi_\theta$, best state representation and intrinsic reward function $\mathcal{F}_{best}, \mathcal{G}_{best}$.\vspace{-0.5em}\\
	\begin{algorithmic}
        \STATE \texttt{\# Sampling and Training Stage}
		\FOR {itr = 1 to $I$}
		\STATE Sample $\mathcal{F}_k, \mathcal{G}_k \sim \mathcal{M}(l_{\text{itr}}), k=1,\dots,K$
		\STATE Concurrently execute $K$ training processes over $N_{small}$ training timesteps using $\mathcal{F}_k, \mathcal{G}_k$.
		\STATE Obtain $C_k$ according to Equation~\eqref{eq:corr_1}, \eqref{eq:corr_2} and final policy performance $\nu_k$.
		\STATE $l_{feedback} = \mathcal{M}(l_{analysis};\{C_k\};\{\nu_k\})$
		\STATE $l_{\text{itr}+1} = \Big(l_{\text{itr}};\space l_{feedback}\Big)$
		\ENDFOR	
		\STATE \texttt{\# Evaluating Stage}
		\STATE Initialize $\pi_\theta$ and $Q_\phi$.
		\STATE Execute $N$ training timesteps using $\mathcal{F}_{best}, \mathcal{G}_{best}$.
		\STATE \textbf{return} $\pi_\theta$, $\mathcal{F}_{best}, \mathcal{G}_{best}$.
	\end{algorithmic}
\end{algorithm} 

\textbf{$\bullet$ Discontinuous Scenarios} Dealing with scenarios with discontinuous extrinsic reward using conventional RL baselines is notably challenging and constitutes a specialized research area~\cite{vecerik2017leveraging, trott2019keeping, liu2023lazy}. Despite this challenge, LESR remains effective in such scenarios. We provide two ways of estimating Lipschitz constant as feedback in discontinuous extrinsic reward settings.

\textbf{LESR with Discounted Return} In Equation~\ref{eq:corr_1}, $u_i$ initially maps each dimension of $\mathcal{S}^c$ to dense extrinsic rewards. In sparse reward settings, we substitute these extrinsic rewards with the discounted episode return $\sum_{t}\gamma^t r$. Thus, $u_i$ now maps each dimension of $\mathcal{S}^c$ to the discounted episode returns, consistent with the algorithmic framework and theoretical scope proposed in LESR. 

\textbf{LESR with Spectral Norm} Since in Theorems~\ref{theo:value_lip} and \ref{theo:value_lip_convergence} show that reducing the Lipschitz constant of the reward function lowers the upper bound of $Lip(V; \mathcal{S})$ and improves the convergence of value functions. Therefore, we can use the spectral norm to estimate $Lip(V; \mathcal{S})$~\cite{anil2019sorting, fazlyab2019efficient} as feedback to LLM. By calculating the spectral norm of the $N$ weight matrices $W_1, \dots, W_N$ of the value functions, the Lipschitz constant of the value function is bounded by $\prod_{i=1}^{N} \|W_i\|_2$, which is then presented to the LLM as feedback.

\subsection{Theoretical Analysis}\label{sec:theory}
In this section, we present analysis of the theoretical implications of the Lipschitz constant on convergence in neural networks, inspired by \citet{oberman2018lipschitz}.
Consider the dataset and labels $\mathcal{X}_0, \mathcal{Y}_0$ defined in Definition~\ref{defi:constant} with $N=|\mathcal{X}_0|$ elements.
The true mapping from $\mathcal{X}$ to $\mathcal{Y}$ is denoted as $u_0^*: \mathcal{X}\rightarrow \mathcal{Y}$.
Let $f:\mathcal{X} \rightarrow \mathcal{X}$ be a function transforming a source $x\in \mathcal{X}$ into a more task-related $f(x)$.

\begin{definition}\label{eq:emprical_loss}
	Denote $u(x;\psi)$ as a neural network mapping parameterized by $\psi$. Consider the empirical loss for $\mathcal{X}_0, \mathcal{Y}_0$, where $\ell : \mathcal{Y}\times \mathcal{Y} \to \mathbb{R}$ is a loss function satisfying  (i)  $\ell \geq 0$, (ii)  $\ell(y_1,y_2) = 0$ if and only if $y_1 = y_2$:
	\begin{equation}\label{eq:loss}
		\min_{u: \mathcal{X}_0 \to \mathcal{Y}}  \mathcal{L}(u, \mathcal{X}_0) =  \frac{1}{N} \sum_{i=1}^N\ell(u(x_i; \psi), y_i).
	\end{equation}
\end{definition}

\begin{assumption}\label{assumption}
	The mapping $f:\mathcal{X} \rightarrow \mathcal{X}$ only swaps the order of $x$ in the dataset $\mathcal{X}_0$, which means $\mathcal{X}_1 = \{f(x_i) | f(x_i) \in \mathcal{X}_0, i =1,\dots,N\}$ and $\mathcal{X}_1 = \mathcal{X}_0$.
	While for $x\in\mathcal{X}_2=\{x| x\in\mathcal{X}, x \notin \mathcal{X}_0\}$, $f(x) = x$. 
    Under $f$, the true mapping from $f(\mathcal{X})$ to $\mathcal{Y}$ is denoted as $u_1^*: f(\mathcal{X})\rightarrow \mathcal{Y}$. It can be derived that $u_1^* = u_0^* \circ f^{-1}$.
    We suppose under $f$ a lower Lipschitz constant is achieved:
	\begin{equation} 
		\begin{aligned}
			\Lp(u_1^*) &\leq \Lp(u_0^*).
		\end{aligned}
	\end{equation}
\end{assumption}

\begin{theorem}\label{theorem:convergence}
	Under Assumption~\ref{assumption}, Given $\mathcal{X}_0, \mathcal{Y}_0$ and $\mathcal{X}_1, \mathcal{Y}_1=\{y_i | y_i=u_1^*(x_i), x_i \in \mathcal{X}_1\}$, $u_0\in\mathcal{U}_0$ is any minimizer of $\mathcal{L}(u, \mathcal{X}_0)$ and $u_1\in\mathcal{U}_1$ is any minimizer of $\mathcal{L}(u, \mathcal{X}_1)$, where $\mathcal{U}_0$ and $\mathcal{U}_1$ denote the solution set of $\mathcal{L}(u, \mathcal{X}_0)$ and $\mathcal{L}(u, \mathcal{X}_1)$ in Definition~\ref{eq:emprical_loss} relatively, then on the same condition when $\Lp(u_0) = \Lp(u_1)$:
	\begin{equation}\label{eq:lower_bound}
		\sup_{u_1\in\mathcal{U}_1} \mathbb{E}_{x \sim \rho_f} \|u_1^* - u_1\|_2 \leq \sup_{u_0\in\mathcal{U}_0} \mathbb{E}_{x \sim \rho}\|u_0^* - u_0\|_2,
	\end{equation}
\end{theorem}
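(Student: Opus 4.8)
The plan is to reduce both suprema to a single common quantity, a worst-case interpolation error, following the Lipschitz-regularized generalization analysis of \citet{oberman2018lipschitz}. The key ingredient is a pointwise estimate: if $S=\{x_1,\dots,x_N\}\subset\mathcal{X}$ is a finite dataset, $u^*$ an $L^*$-Lipschitz ground-truth map, and $u$ any $L$-Lipschitz function interpolating $\{(x_i,u^*(x_i))\}$, then for each $x\in\mathcal{X}$, choosing a nearest sample $x_j\in S$ and combining the triangle inequality with the two Lipschitz bounds yields
\[
\|u^*(x)-u(x)\|_2\le\|u^*(x)-u^*(x_j)\|_2+\|u(x_j)-u(x)\|_2\le (L+L^*)\,\mathrm{dist}(x,S),
\]
hence $\mathbb{E}_{x\sim\mu}\|u^*-u\|_2\le (L+L^*)\,d_\mu(S)$ where $d_\mu(S):=\mathbb{E}_{x\sim\mu}[\mathrm{dist}(x,S)]$. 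I would then argue this is sharp over the interpolant class: a McShane/Whitney-type extremal interpolant, rescaled to Lipschitz constant exactly $L$ and pushed maximally away from $u^*$ in the regions between sample points, attains the bound in the supremum, so that $\sup_u\mathbb{E}_{x\sim\mu}\|u^*-u\|_2=(L+L^*)\,d_\mu(S)$.

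Instantiating this for the two problems with the common Lipschitz level $L:=\Lp(u_0)=\Lp(u_1)$ gives $\sup_{u_0\in\mathcal{U}_0}\mathbb{E}_{x\sim\rho}\|u_0^*-u_0\|_2=(L+\Lp(u_0^*))\,d_\rho(\mathcal{X}_0)$ and $\sup_{u_1\in\mathcal{U}_1}\mathbb{E}_{x\sim\rho_f}\|u_1^*-u_1\|_2=(L+\Lp(u_1^*))\,d_{\rho_f}(\mathcal{X}_1)$. It then remains to compare the two right-hand sides via Assumption~\ref{assumption}. Since $\mathcal{X}_1=\mathcal{X}_0$ and $f$ restricts to a bijection of $\mathcal{X}_0$ onto itself while being the identity on $\mathcal{X}\setminus\mathcal{X}_0$, one has $\mathrm{dist}(f(x),\mathcal{X}_0)=\mathrm{dist}(x,\mathcal{X}_0)$ for every $x$, so (taking $\rho_f=f_\#\rho$) $d_{\rho_f}(\mathcal{X}_1)=\mathbb{E}_{x\sim\rho}[\mathrm{dist}(f(x),\mathcal{X}_0)]=d_\rho(\mathcal{X}_0)$; and by Assumption~\ref{assumption}, $\Lp(u_1^*)\le\Lp(u_0^*)$. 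Chaining these facts,
\begin{align*}
\sup_{u_1\in\mathcal{U}_1}\mathbb{E}_{x\sim\rho_f}\|u_1^*-u_1\|_2
&=(L+\Lp(u_1^*))\,d_\rho(\mathcal{X}_0)\\
&\le(L+\Lp(u_0^*))\,d_\rho(\mathcal{X}_0)
=\sup_{u_0\in\mathcal{U}_0}\mathbb{E}_{x\sim\rho}\|u_0^*-u_0\|_2,
\end{align*}
which is the claim.

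The step I expect to be the main obstacle is the sharpness (lower-bound) half of the pointwise estimate: the triangle-inequality argument alone yields only one-sided control for each individual interpolant, whereas the inequality between the two suprema genuinely needs the right-hand supremum to \emph{equal} $(L+\Lp(u_0^*))\,d_\rho(\mathcal{X}_0)$ — otherwise one merely compares the two upper bounds rather than the quantities themselves. Making this precise means exhibiting an interpolant that is simultaneously globally $L$-Lipschitz (not merely $L$-Lipschitz on the finite dataset) and maximally distant from $u_0^*$ on a set of near-full $\rho$-measure; a suitably perturbed extremal (McShane/Whitney) extension is the natural construction, but controlling its global Lipschitz constant while pushing its deviation up to the bound is the technical heart. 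One should also record the minor prerequisite that $u_0^*$, and $u_1^*$ off the $\rho$-null set $\mathcal{X}_0$, are $L$-Lipschitz, so that the interpolant classes $\mathcal{U}_0,\mathcal{U}_1$ are nonempty and the estimates non-vacuous.
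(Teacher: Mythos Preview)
Your approach is essentially identical to the paper's: the same pointwise bound via the nearest-sample projection $\sigma_{\mathcal{X}_i}$ and triangle inequality, the same equality $d_{\rho_f}(\mathcal{X}_1)=d_\rho(\mathcal{X}_0)$ (the paper's Lemma~\ref{lemma1}), and the same final chaining through $\Lp(u_1^*)\le\Lp(u_0^*)$ and $\Lp(u_0)=\Lp(u_1)$. You are in fact more scrupulous than the paper on one point: the paper simply \emph{asserts} $\sup_{u_i}\mathbb{E}\|u_i^*-u_i\|_2 = [\Lp(u_i^*)+\Lp(u_i)]\,\mathbb{E}\|x-\sigma_{\mathcal{X}_i}(x)\|_2$ as an equality without constructing the extremal interpolant, whereas you correctly identify that the triangle-inequality step alone yields only the $\le$ direction and that the sharpness half is what actually licenses comparing the two suprema.
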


$\rho$ and $\rho_f$ denote the source probability distribution on $\mathcal{X}$ and probability distribution on $f(\mathcal{X})$, relatively. In Theorem~\ref{theorem:convergence}, it is demonstrated that the mapping $f$ exhibiting a lower Lipschitz constant can attain superior convergence. This observation underscores the significance of identifying task-related state representations characterized by lower Lipschitz constants with respect to the associated rewards.
Such analysis to some extent sheds light on why smoother network mappings exhibit improved convergence performance.
Proofs of Theorem~\ref{theorem:convergence} can be referred to Appendix~\ref{appendix:proof}.

We delve deeper into the significance of the Lipschitz constant of the reward concerning state representations in RL. We introduce two additional theorems, namely Theorem~\ref{theo:value_lip} and Theorem~\ref{theo:value_lip_convergence}, establishing a strong correlation between $Lip(r; \mathcal{S})$ and $Lip(V; \mathcal{S})$ and, consequently, the convergence of RL's value functions. Theorem~\ref{theo:value_lip} indicates that reducing the Lipschitz constant of the reward function lowers the upper bound of $Lip(V; \mathcal{S})$. Theorem~\ref{theo:value_lip_convergence} illustrates how decreasing $Lip(r; \mathcal{S})$ can enhance the convergence of RL algorithms' value functions. These theorems collectively emphasize our focus on minimizing the Lipschitz constant of the reward function to improve RL algorithms' convergence. Detailed proofs are available in Appendix~\ref{appen:why_crucial}.

\section{Experiments}
\label{experiments}

In this section, we will assess \textbf{L}LM-\textbf{E}mpowered \textbf{S}tate \textbf{R}epresentation (\textbf{LESR}) through experiments on two well-established reinforcement learning (RL) benchmarks: Mujoco~\cite{todorov2012mujoco, brockman2016openai} and Gym-Robotics~\cite{de2023gymnasium}. For more information about the tasks, see Appendix~\ref{appendix:env_info}.  The following questions will guide our investigation:

\textbullet \textbf{Q1:} Can LESR generate task-related state representations characterized by lower Lipschitz constants in relation to extrinsic environmental rewards?
(Section~\ref{sec:demo_tsne})

\textbullet \textbf{Q2:} Can LESR achieve higher sample efficiency and outperform RL baselines? Does each component contribute to the final performance? (Section~\ref{sec:performance},~\ref{sec:ablation})

\textbullet \textbf{Q3:} Are functions $\mathcal{F}_{best}$ and $\mathcal{G}_{best}$ algorithm-agnostic, and transferable directly to other RL algorithms? (Section~\ref{sec:algorithm_transfer})

\textbullet \textbf{Q4:} Do $\mathcal{F}_{best}$ and $\mathcal{G}_{best}$ possess semantic-physical significance and exhibit consistency across different runs of LLM? Does LESR exhibit robustness to the variations in hyperparameters? (Section~\ref{sec:semantic_analysis},~\ref{sec:robostness})

\subsection{Implementation Details}

\textbf{LLM and Prompts} We employ the gpt-4-1106-preview as LLM to generate the state representation and intrinsic reward functions.
There are three well-designed prompt templates and details of prompts are available in Appendix~\ref{appendix:prompts}.  

\textbf{Baseline Algorithm} We employ the SOTA RL algorithm TD3~\cite{fujimoto2018addressing} as the foundational Deep Reinforcement Learning~(DRL) algorithm. Building upon the implementation of TD3 as provided in the source paper\footnote{https://github.com/sfujim/TD3}, we have formulated LESR~(Ours). We also employ EUREKA~\cite{ma2023eureka} for comparison, which incorporates LLM for human-level reward designs. It is noteworthy that, in order to maintain comparative fairness, we have adhered to the hyperparameters of TD3 without introducing any modifications. Both EUREKA and LESR are grounded in the common RL algorithm TD3. For a comprehensive list of hyperparameters, please refer to Appendix~\ref{appendix:hypers}. 

\subsection{LESR Can Enhance Lipschitz Continuity}\label{sec:demo_tsne}

The reward function $\mathcal{R}$ commonly serves as an indicator of the specific task within the same environment~\cite{ yoo2022skills}. In RL, value function learning is also predicated on rewards. Specifically, when the discount factor $\gamma$ equals to 0, the value function directly learns the rewards. Additionally, the state representations form a part of the input to the value function. Task-related state representations might contribute to enhancing the Lipschitz continuity of the value function, thereby expediting the learning process.

Therefore, to validate whether LESR can help enhance the Lipschitz continuity between the generated state representations and the value function, we execute the final policy of LESR in the Mujoco \texttt{Ant} environment for 20 episodes of length $H=1000$ and establish two datasets: (a) $T_1 = \{s_t, r_t\}_{t=1}^H$ and (b) $T_2 = \{\mathcal{F}_{best}(s_t), r_t\}_{t=1}^H$. For states within each dataset, we employ t-SNE~\citep{van2008visualizing} to visualize the states on the 2D graph with coloring the data with corresponding rewards and calculate the Lipschitz constant between states and rewards.

As shown in Figure~\ref{fig:demo_lipschitz}, it is illustrated that in $T_1$ the Lipschitz constant of the mapping from state representations generated by LESR to the extrinsic environment rewards ($\mathcal{R}:\mathcal{F}(s)\to r, \Lp(\mathcal{R};T_2) = 168.1$) is much lower than that of the source($\mathcal{R}:s\to r, \Lp(\mathcal{R};T_1) = 560.2$). This indicates that the task-related state representations generated by LESR can enhance the Lipschitz continuity.

\begin{figure}[t]
	\centering
	
	\begin{subfigure}{0.45\linewidth}
		\centering
		\includegraphics[width=\linewidth]{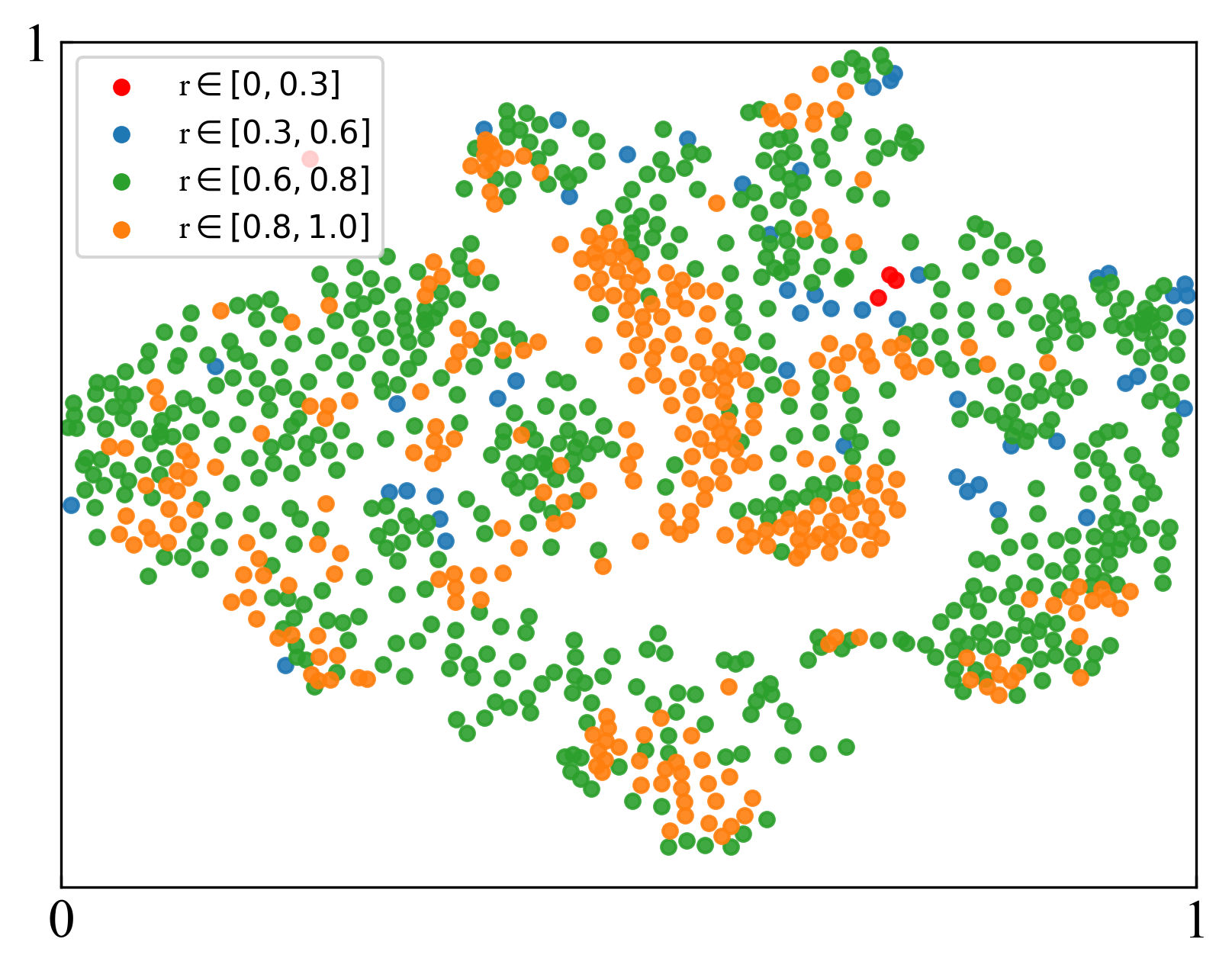}
		\captionsetup{font=scriptsize}
		\caption{$\Lp(\mathcal{R};T_1) = 560.2$}
	\end{subfigure}
	\begin{subfigure}{0.45\linewidth}
		\centering
		\includegraphics[width=\linewidth]{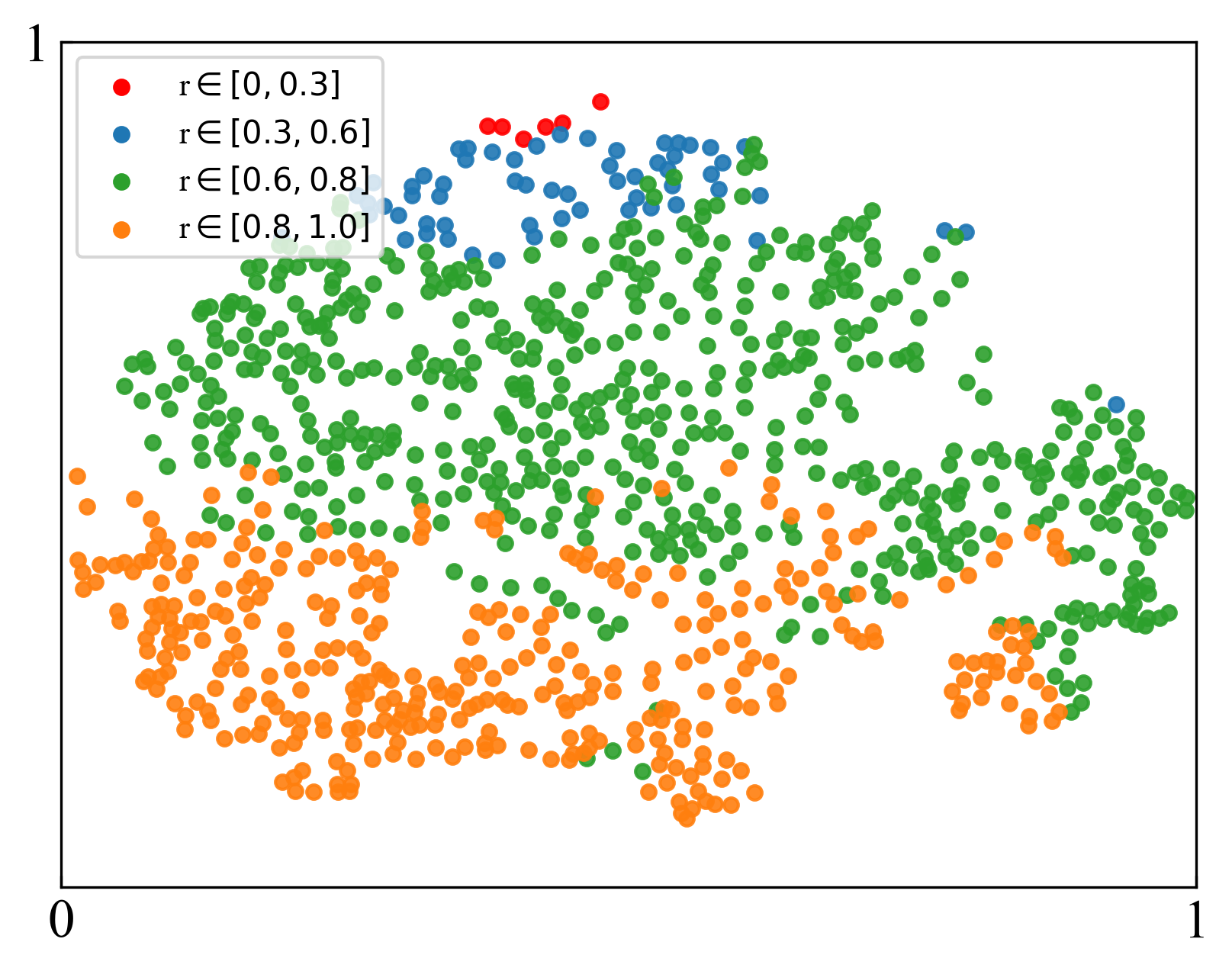}
		\captionsetup{font=scriptsize}
		\caption{$\Lp(\mathcal{R};T_2) = 168.1$}
	\end{subfigure}
	
	\caption{Visualization illustrating states post 2D dimensionality reduction via t-SNE. Details of $T_1$ and $T_2$ can be referred to Section~\ref{sec:demo_tsne}. The reward for each state is normalized to a range of $[0, 1]$ and discretized, and the graph employs color coding to represent their respective reward values.}
	\vspace{-10pt}
	\label{fig:demo_lipschitz}
\end{figure}

\subsection{LESR Can Achieve High Sample Efficiency} \label{sec:performance} 

\textbf{Performance Comparison} We validate LESR and the results are presented in Table~\ref{tab1:all_run}. In Mujoco environments, LESR outperforms the state-of-art baselines in 4 out of 5 tasks. In Gym-Robotics environments, it outperforms the baselines in 5 out of 6 tasks. Particularly noteworthy are the results on challenging antmaze tasks, where TD3 fails to train, resulting in all final success rates of TD3 remaining at zero. Conversely, the performance exhibited by LESR is marked by excellence, highlighting its proficiency in overcoming the challenges posed by these intricate tasks. LESR yields an average performance improvement of 29\% over the 5 tasks in Mujoco and an average success improvement of 30\% over the 7 tasks in Gym-Robotics, thus substantiating the efficacy of the employed methodology. More information of training can be referenced in Appendix~\ref{appendix:training_curve}.

\begin{table}[t]
	\centering
	\caption{Final Performance on Mujoco environments over only \textbf{300k} environment interaction training steps, aiming to validate the sample efficiency of LESR. \textbf{RPI} = \texttt{(accumulated\_rewards - baseline)/baseline}: relative performance improvement.}
	\label{table:300k}
	\resizebox{.95\columnwidth}{!}{
		\begin{tabular}{c|c|cc}
			\toprule
			\diagbox{\textbf{Environments}}{\textbf{Algorithm}} & \textbf{TD3} & \textbf{LESR(Ours)} & \textbf{RPI} \\
			\midrule
			HalfCheetah & 7288.3$\pm$842.4 & \textbf{7639.4$\pm$464.1} & \textbf{5\%} \\
			Hopper & 921.9$\pm$646.8 & \textbf{2705.0$\pm$623.2} & \textbf{193\%} \\
			Walker2d & 1354.5$\pm$734.1 & \textbf{1874.8$\pm$718.2} & \textbf{38\%} \\
			Ant & 1665.2$\pm$895.5 & \textbf{1915.3$\pm$885.5} & \textbf{15\%} \\
			Swimmer & 49.9$\pm$5.2 & \textbf{150.9$\pm$9.5} & \textbf{203\%} \\
			\textbf{Mujoco Improve Mean} & - &  - & \textbf{91\%} \\
			\bottomrule
	\end{tabular} }
\end{table}

\textbf{Sample Efficiency} For the purpose of assessing the sample efficiency of LESR, we have presented the performance for a limited scope of 300k training steps, as depicted in Table~\ref{table:300k}. The results demonstrate that LESR exhibits superior performance within significantly fewer training steps, excelling comprehensively across all tasks in comparison to the SOTA baseline, thereby manifesting heightened sample efficiency. This is further supported by the information of experimental details 
presented in Appendix~\ref{appendix:training_curve}. 

\begin{table*}[htbp]
	\centering
	\caption{Final Performance on Mujoco and Gym-robotics environments over \textbf{1M} environment interaction training steps. \textbf{Ours w/o IR:} without intrinsic reward. \textbf{Ours w/o SR:} without state representation. \textbf{Ours w/o FB:} without feedback. More details about the ablation study are elucidated in Section~\ref{sec:ablation}. \textbf{RPI} = \texttt{(accumulated\_rewards - baseline)/baseline}: relative performance improvement. \textbf{PI} = \texttt{success\_rate - baseline}: success rate improvement. This distinction arises because the performance is represented by the accumulative rewards in Mujoco, whereas in Gym-Robotics, it is measured by the success rate. The "mean$\pm$std" denotes values computed across five random seeds.}
	\label{tab1:all_run}
	\resizebox{\textwidth}{!}{
		\begin{tabular}{c|c|cc|cc|cc|cc|cc}
			\toprule
			\diagbox{\textbf{Environments}}{\textbf{Algorithm}} & \textbf{TD3} & \textbf{EUREKA} & \textbf{RPI} & \textbf{Ours w/o IR} & \textbf{RPI} & \textbf{Ours w/o SR} & \textbf{RPI} & \textbf{Ours w/o FB} & \textbf{RPI} &\textbf{LESR(Ours)} & \textbf{RPI} \\
			\midrule
			HalfCheetah & 9680.2$\pm$1555.8 & 10400.6$\pm$289.2 & 7\% & 9969.8$\pm$1767.5 & 3\% & 9463.2$\pm$796.3 & -2\% & 9770.4$\pm$1531.3 & 1\% & \textbf{10614.2$\pm$510.8} & \textbf{10\%} \\
			Hopper & 3193.9$\pm$507.8 & 3346.4$\pm$423.2 & 5\% & 3324.7$\pm$191.7 & 4\% & 3159.0$\pm$466.7 & -1\% & 2851.3$\pm$748.1 & -11\% & \textbf{3424.8$\pm$143.7} & \textbf{7\%} \\
			Walker2d & 3952.1$\pm$445.7 & 3606.2$\pm$1010.0 & -9\% & 4148.2$\pm$352.9 & 5\% & 3977.4$\pm$434.3 & 1\% & 4204.9$\pm$590.3 & 6\% & \textbf{4433.0$\pm$435.3} & \textbf{12\%} \\
			Ant & 3532.6$\pm$1265.3 & 2577.7$\pm$1085.6 & -27\% & \textbf{5359.0$\pm$336.4} & \textbf{52\%} & 3962.2$\pm$1332.0 & 12\% & 4244.9$\pm$1227.9 & 20\% & 4343.4$\pm$1171.4 & 23\% \\
			Swimmer & 84.9$\pm$34.0 & 98.1$\pm$31.1 & 16\% & 132.0$\pm$6.0 & 55\% & 160.2$\pm$10.2 & 89\% & 85.4$\pm$29.8 & 1\% & \textbf{164.2$\pm$7.6} & \textbf{93\%} \\
			\textbf{Mujoco Relative Improve Mean} & - & - & -2\% &  - & 24\% &  - & 21\% &  - & 3\% &  - & \textbf{29\%} \\
			\midrule
			- & - & - & \textbf{PI} & - & \textbf{PI} &  - & \textbf{PI} &  - & \textbf{PI} &  - & \textbf{PI} \\
			\midrule
			AntMaze\_Open & 0.0$\pm$0.0 & 0.13$\pm$0.04 & 13\% & 0.0$\pm$0.0 & 0\% & 0.15$\pm$0.06 & 15\% & 0.16$\pm$0.07 & 16\% & \textbf{0.17$\pm$0.06} & \textbf{17\%} \\
			AntMaze\_Medium & 0.0$\pm$0.0 & 0.01$\pm$0.01 & 1\% & 0.0$\pm$0.0 & 0\% & 0.07$\pm$0.05 & 7\% & 0.0$\pm$0.0 & 0\% & \textbf{0.1$\pm$0.05} & \textbf{10\%} \\
			AntMaze\_Large & 0.0$\pm$0.0 & 0.0$\pm$0.0 & 0\% & 0.0$\pm$0.0 & 0\% & 0.06$\pm$0.03 & 6\% & 0.04$\pm$0.04 & 4\% & 0.07$\pm$0.03 & 7\% \\
			FetchPush & 0.07$\pm$0.02 & 0.07$\pm$0.03 & 0\% & 0.78$\pm$0.16 & 71\% & 0.06$\pm$0.05 & -1\% & 0.77$\pm$0.14 & 70\% & \textbf{0.91$\pm$0.08} & \textbf{84\%} \\
			AdroitHandDoor & 0.53$\pm$0.44 & 0.83$\pm$0.08 & 30\% & 0.46$\pm$0.46 & -7\% & 0.63$\pm$0.39 & 10\% & 0.23$\pm$0.38 & -30\% & \textbf{0.88$\pm$0.12} & \textbf{34\%} \\
			AdroitHandHammer & 0.28$\pm$0.32 & 0.32$\pm$0.45 & 4\% & 0.22$\pm$0.21 & -6\% & 0.29$\pm$0.28 & 1\% & 0.41$\pm$0.33 & 13\% & \textbf{0.53$\pm$0.38} & \textbf{25\%} \\
			\textbf{Gym-Robotics Improve Mean} & - & - & 8\% &  - & 8\% &  - & 6\% &  - & 10\% &  - & \textbf{30\%} \\
			\bottomrule
		\end{tabular} 
	}
\end{table*}

\subsection{Ablation Study}\label{sec:ablation}

\textbf{Experimental Setting} In this section, we conduct ablative analyses on distinct components of LESR to evaluate their respective contributions. Three ablation types are considered in total. \textbf{Ours w/o IR:} This entails the removal of the intrinsic reward component, with training relying solely on state representation. \textbf{Ours w/o SR:} Here, we exclude the state representation part. It is crucial to note that despite this ablation, the state representation function is still necessary due to the intrinsic reward calculation process $r^i = \mathcal{G}\Big(s, \mathcal{F}(s)\Big)$ outlined in Equation~\eqref{eq:max_obj_2}. However, in this context, only the source state $s$ is provided as input to the policy, instead of the concatenated state $s^c$. \textbf{Ours w/o FB:} In this instance, the `iteration\_count' specified in Appendix~\ref{appendix:hypers} is set to 1 and the `sample\_count' is increased to 10, eliminating the feedback component.  

\begin{table*}[t]
	\begin{minipage}[b]{0.56\linewidth}
		\centering
		\caption{Gym-Robotics Results. \textbf{PPO} and \textbf{SAC} denote algorithms trained using the original state and reward functions within the tasks. \textbf{+ Ours} signifies training utilizing state representation functions and the corresponding intrinsic reward functions. \textbf{PI}: performance improvement.}
		\label{tab:algotransfer}
		\resizebox{\textwidth}{!}{
			\begin{tabular}{c|ccc|ccc}
				\toprule
				\diagbox{\textbf{Environments}}{\textbf{Algorithm}} & \textbf{PPO} & \textbf{PPO + Ours} & \textbf{PI} & \textbf{SAC} & \textbf{SAC + Ours} & \textbf{PI} \\
				\midrule
				AntMaze\_Open & 0.006$\pm$0.005 & \textbf{0.12$\pm$0.022} & \textbf{11\%} & 0.002$\pm$0.001 & \textbf{0.12$\pm$0.033} & \textbf{12\%} \\
				AntMaze\_Medium & 0.0$\pm$0.0 & \textbf{0.156$\pm$0.039} & \textbf{16\%} & 0.008$\pm$0.003 & \textbf{0.108$\pm$0.03} & \textbf{10\%} \\
				AntMaze\_Large & 0.006$\pm$0.004 & 0.092$\pm$0.02 & 9\% & 0.009$\pm$0.003 & \textbf{0.124$\pm$0.029} & \textbf{12\%} \\
				FetchPush & 0.063$\pm$0.017 & \textbf{0.812$\pm$0.13} & \textbf{75\%} & 0.059$\pm$0.015 & \textbf{0.589$\pm$0.207} & \textbf{53\%} \\
				AdroitHandDoor & 0.001$\pm$0.001 & \textbf{0.948$\pm$0.047} & \textbf{95\%} & 0.759$\pm$0.158 & \textbf{0.956$\pm$0.035} & \textbf{20\%} \\
				AdroitHandHammer & 0.006$\pm$0.002 & 0.007$\pm$0.002 & 0\% & 0.752$\pm$0.158 & \textbf{0.892$\pm$0.084} & \textbf{14\%} \\
				\textbf{Gym-Robotics Improve Mean} & - & - & \textbf{34\%} & - & - & \textbf{20\%}\\
				\bottomrule
		\end{tabular} }
	\end{minipage}
	\hspace{5pt}
	\begin{minipage}[b]{0.42\linewidth}
        \centering
          \caption{Consistency verification in \texttt{Swimmer}. $seed_i$ denotes the experiments over four random seeds. $c_i$ signifies the state representation categories. \checkmark signifies the inclusion of a particular category in the corresponding seed.}
        \label{tab:consis}%
        \resizebox{.8\textwidth}{.18\textwidth}{
            \begin{tabular}{c|cccc}
            \toprule
                  & \textbf{$seed_1$} & \textbf{$seed_2$} & \textbf{$seed_3$} & \textbf{$seed_4$} \\
            \midrule
            \textbf{$c_1$} & \checkmark     & \checkmark     & \checkmark     & \checkmark \\
            \textbf{$c_2$} & \checkmark     & \checkmark     & \checkmark     & \checkmark \\
            \textbf{$c_3$} & \checkmark     & \checkmark     & \checkmark     & \checkmark \\
            \textbf{$c_4$} &      & \checkmark     & \checkmark     & \checkmark \\
            \textbf{$c_5$} &      & \checkmark     &     &  \\
            \bottomrule
            \end{tabular} }
	\end{minipage}
\end{table*} 

\textbf{Results and Analysis} The ablation results are presented in Table~\ref{tab1:all_run}. It is elucidated that regardless of the component subjected to ablation, a substantial decline in final performance ensues, underscoring the indispensability of all components for the final efficacy of our method. Crucially, our findings demonstrate that when only the intrinsic rewards part is ablated, namely \textbf{Ours w/o IR}, the performance results exhibit minimal influence, particularly in Mujoco environment tasks. This observation unveils the pivotal role of state representation component in our method, highlighting its significant contribution to the primary performance enhancement and further substantiating the assertions made in Section~\ref{section:intro}. Furthermore, since LESR requires that the input of networks be the concatenation of the source state and the generated state representations (i.e., $s_t^c=(s_t, s_t^r)$), we have conducted experiments to substantiate the indispensability of the source state $s_t$. As depicted in Figure~\ref{fig:drop_src}, it is evident that the source state and the generated state representations work synergistically, 
both playing pivotal roles in achieving optimal performance.

\begin{figure}[h]
    \vspace{10pt}
	\centering
	\includegraphics[width=0.82\linewidth]{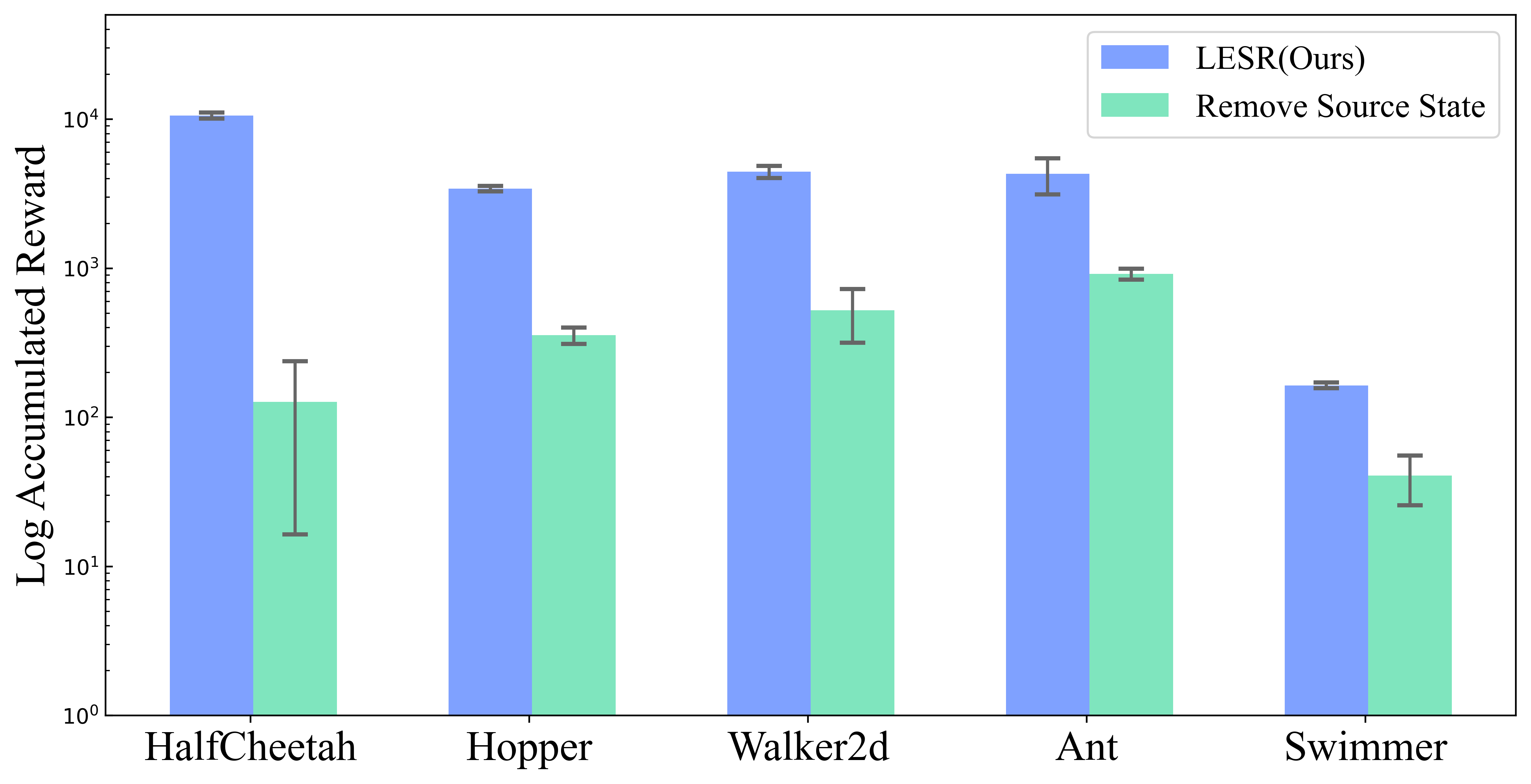}
	\caption{Comparison of Mujoco tasks between LESR and the removal of source state during training (i.e., utilizing only the ($\mathcal{F}(s)$) as input for policy and critic network training). The y-axis is on a logarithmic scale, and error bars represent 5 random seeds.} 
	\vspace{-5pt}
	\label{fig:drop_src}
\end{figure}

Besides, in Appendix~\ref{appen:futher_ablations}, we also validate the role of the Lipschitz constant of LESR through more ablation experiments. Furthermore, we showcase LESR's robustness through experiments solely utilizing intrinsic reward functions, affirming its reliability. Additionally in Appendix~\ref{appen:futher_ablations} experiments on the novel tasks 'Walker Jump' and 'Walker Split Legs' underscore LESR's adaptability to new scenarios. 

\subsection{Directly Transfer to Other Algorithms }\label{sec:algorithm_transfer}

As there is no algorithm-specific information provided in the iteration prompts, we hypothesize that the state representation and intrinsic reward functions are algorithm-agnostic. This suggests that they can be directly transferred and integrated with other RL algorithms without the iteration process in Algorithm~\ref{alg:ours}. 
To substantiate this hypothesis, we retain the best state representation and intrinsic reward function $\mathcal{F}_{best}, \mathcal{G}_{best}$ and combine them with two other widely employed RL algorithms \textbf{PPO}~\cite{schulman2017proximal} and \textbf{SAC}~\cite{haarnoja2018soft} for validation.

The results in Table~\ref{tab:algotransfer} highlight that the state representations and their associated intrinsic reward functions, acquired through the training of \textbf{TD3}, exhibit the potential to be integrated with alternative algorithms, still resulting in improved outcomes. This validates our initial hypothesis. Furthermore, these results further emphasize the efficacy and adaptability of our approach. Consequently, by simply employing a fundamental algorithm to explore state representation and intrinsic reward functions for a given task, it becomes possible to significantly diminish the training complexity associated with that task for other algorithms.

\subsection{Semantic Analysis and Consistency Verification}\label{sec:semantic_analysis}

To elucidate the precise function of the state representations and comprehend the rationale behind their superior performance, we meticulously analyze the state representation functions produced by LLM. We use the \texttt{Swimmer} task from Mujoco as an example for semantic analysis. 
Please refer to Appendix~\ref{appendix:semantic} for details about the state representation functions generated by LLM. 

\begin{figure}[t]
    \vspace{-5pt}
	\centering
	\includegraphics[width=\linewidth]{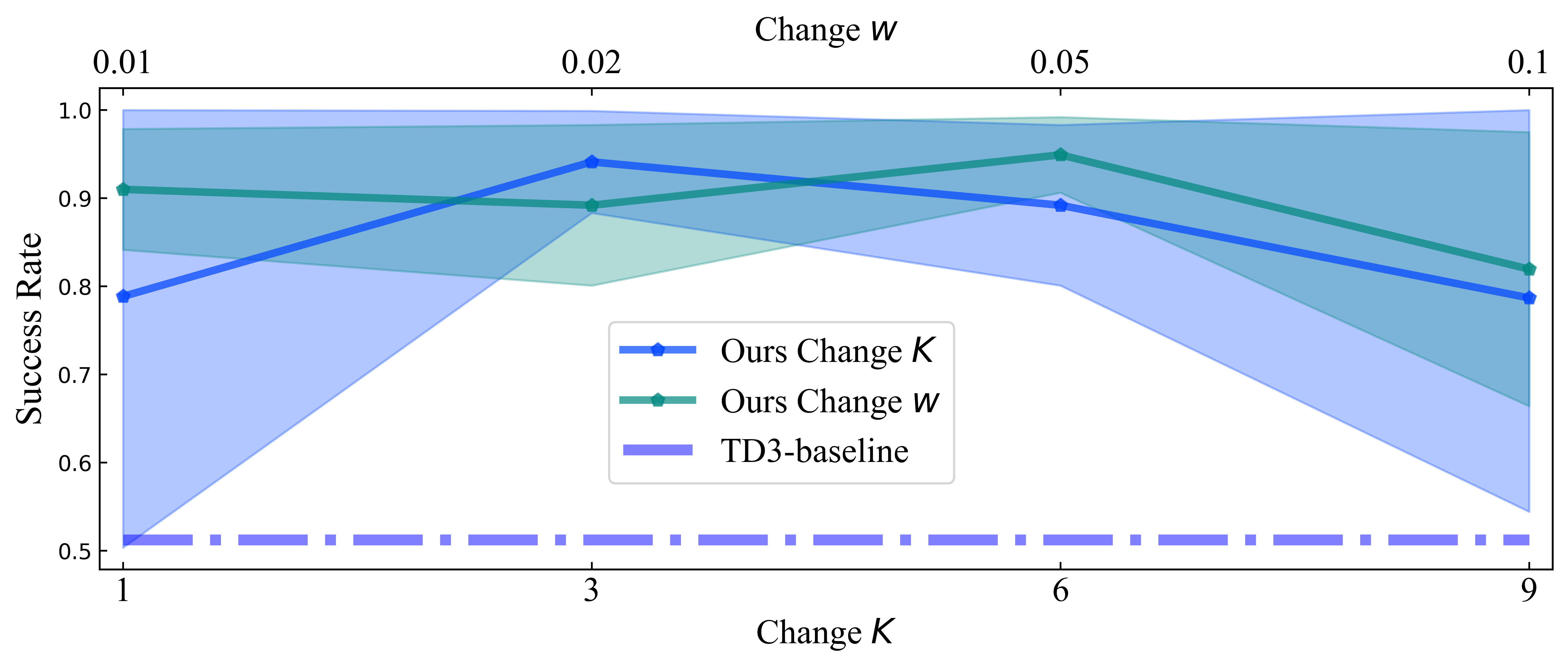}
	\caption{Experimental results for hyperparameter variations in \texttt{AdroitHandDoor}. The sample count $K$ is adjusted to [1, 3, 6, 9]~(bottom x-axis), and the intrinsic reward weight $w$ is modified to [0.01, 0.02, 0.05, 0.1]~(top x-axis). }
	\vspace{-15pt}
	\label{fig:robost-1}
\end{figure}

\textbf{Semantic Analysis} In the case of the \texttt{Swimmer} task, the goal is to move as fast as possible towards the right by applying torque on the rotors and using the fluids friction, without taking any drastic action. The state representations in this task can be categorized into five groups, whose names are derived from the comments provided by LLM: \textit{cosine or sine of the angles~(c1)}, \textit{relative angles between adjacent links~(c2)}, \textit{kinetic energy~(c3)}, \textit{distance moved~(c4)}, and \textit{sum of torques applied~(c5)}. It is apparent that these representations exhibit a strong correlation with the task's objective: \textit{c1} and \textit{c2} contribute to enhanced understanding of posture, \textit{c3} and \textit{c5} signify the necessity to avoid excessive actions, while \textit{c4} corresponds directly to the objective of advancing towards the target.

\textbf{Consistency of LLM} Through semantic analysis, it is demonstrable that LLM is capable of generating significantly task-related state representations of a physically meaningful nature, stemming from the source state. Then, how consistent are the answers from LLM? We carried out experiments with four random seeds for the \texttt{Swimmer} task, and took out the final state representation functions in each experiment for statistics. As demonstrated in Table~\ref{tab:consis}, the functions generated by LLM exhibit a pronounced level of consistency across diverse experiments, e.g. \textbf{$c_1$}, \textbf{$c_2$} and \textbf{$c_3$} are included in all experiments. This reaffirms the importance and universality of the extended state while ensuring the robustness and stability of our methodology.

\subsection{Robustness}\label{sec:robostness}

To validate the robustness and stability of varying hyperparameters, we conducted experiments in the \texttt{AdroitHandDoor} environment. We systematically altered the values of two key hyperparameters: the weight of the intrinsic reward $w$ in Equation~\eqref{eq:max_obj_2} and the sample count $K$. For each experiment, we modified one hyperparameter while keeping the others. The outcomes of these experiments are illustrated in Figure~\ref{fig:robost-1}.

The results indicate that an increase in the sample count $K$ can lead to a performance improvement, for instance, from $K = 1$ to $K = 3$. However, elevating the value of $K$ from $3$ to $9$ does not yield further performance enhancement. Regarding the variation in the weight of the intrinsic reward $w$, the findings illustrate that the final performance is not significantly affected by changes in $w$ and remains stable. In summary, the final performance across all hyperparameter tuning experiments remains consistently within a stable range, significantly surpassing the baseline. This observation underscores the stability of our approach.

\section{Conclusion}

In this paper, we introduce \textbf{LESR}, an algorithm leveraging the coding proficiency and interpretive capacity for physical mechanism of LLM to generate state representation and intrinsic reward function codes for reinforcement learning. Demonstrating its efficacy on benchmarks Mujoco and Gym-Robotics, we illustrate LLM's capability to produce task-specific state representations alongside meaningful intrinsic reward functions. These representations enhance Lipschitz continuity in networks, resulting in superior efficiency and outperforming SOTA baselines. In-depth ablations and additional experiments show the consistency and robustness of LESR. We believe that LESR can effectively contribute to various real-world interaction tasks.

However, our work still suffers limitations. A primary constraint lies in our attempt to derive task-related representations solely from the source state features using LLM, without incorporating external information. This approach may restrict the information available to the network in partially observable environments. Besides, the quality of state representations generated by LLM is constrained by its capabilities, and there is no absolute guarantee that it can produce more task-related representations. This limitation is anticipated to be mitigated as LLM evolves. In the future, we are interested in exploring the integration of additional information to establish a more comprehensive framework for state representations. We anticipate that our work may serve as an inspiration for further exploration in this promising area among researchers. 

\section*{Impact Statement}
This paper presents work whose goal is to advance the field of Machine Learning. There are many potential societal consequences of our work, none which we feel must be specifically highlighted here.

\section*{Acknowledgment}
This work was supported by the National Key R\&D Program of China under Grant 2018AAA0102801.


\bibliography{example_paper}
\bibliographystyle{icml2024}

\newpage
\appendix
\onecolumn

\section{Theoretical Analysis}\label{appendix:proof}

\begin{definition}\label{defi:sigma}
	We firstly define the identity projection and closet point projection map $\mbox{Id}:\mathcal{X} \to \mathcal{X}, \sigma_{\mathcal{X}_k}:\mathcal{X} \to \mathcal{X}_k = \{x_1,\dots,x_n\}$ that satisfies
	\begin{equation}\label{eq:sigma}
		\begin{aligned}
			\forall x\in \mathcal{X}, \mbox{Id}(x) &= x \\
			\forall x\in \mathcal{X}, \|x-\sigma_{\mathcal{X}_k}\|_2 &= \min_{1\leq i\leq n}\{ \|x-x_i\|_2 \}
		\end{aligned}
	\end{equation}
\end{definition}

Now we investigate the convergence under the mapping $f$ in Theorem~\ref{theorem:convergence}:

\begin{theorem}[Convergence]\label{theorem:convergence1}
	If $u_1\in\mathcal{U}_1$ is any minimizer of $\mathcal{L}(u, \mathcal{X}_1)$, where $\mathcal{U}_1$ denotes the solution set of $\mathcal{L}(u, \mathcal{X}_1)$ in Definition~\ref{eq:emprical_loss}, then for any $t>0$, there exists $C \geq 1$ and $0 < c < 1$ satisfies the following with probability at least $1-Ct^{-1}N^{-(ct-1)}$:
	\begin{equation} 
		\mathbb{E}_{x \sim \rho_f} \|u_1^* - u_1\|_2 \leq C\Big[ \Lp(u_1^*) + \Lp(u_1) \Big] \left( \frac{t\log N}{N} \right)^{1/d}
	\end{equation}
\end{theorem}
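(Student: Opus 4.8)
\textbf{Proof proposal for Theorem~\ref{theorem:convergence1}.}

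The plan is to bound the expected error $\mathbb{E}_{x\sim\rho_f}\|u_1^*-u_1\|_2$ by splitting the domain into the sample points $\mathcal{X}_1$ (where the empirical loss forces $u_1(x_i)=u_1^*(x_i)$, using property (ii) of $\ell$) and the rest of $\mathcal{X}$, and then controlling how far a generic $x$ is from its nearest sample point. Concretely, for any $x\in\mathcal{X}$ let $\sigma_{\mathcal{X}_1}(x)$ be the closest-point projection of Definition~\ref{defi:sigma}. Since $u_1$ minimizes $\mathcal{L}(u,\mathcal{X}_1)$ and $\ell\ge 0$ with $\ell(y_1,y_2)=0\iff y_1=y_2$, every minimizer satisfies $u_1(x_i)=y_i=u_1^*(x_i)$ at all sample points. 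Hence I would write, by the triangle inequality and the Lipschitz property of both $u_1^*$ and $u_1$,
\begin{equation}
\|u_1^*(x)-u_1(x)\|_2 \leq \|u_1^*(x)-u_1^*(\sigma_{\mathcal{X}_1}(x))\|_2 + \|u_1(\sigma_{\mathcal{X}_1}(x))-u_1(x)\|_2 \leq \Big(\Lp(u_1^*)+\Lp(u_1)\Big)\,\|x-\sigma_{\mathcal{X}_1}(x)\|_2 .
\end{equation}
Taking expectations over $x\sim\rho_f$ reduces the theorem to a probabilistic bound on $\mathbb{E}_{x\sim\rho_f}\|x-\sigma_{\mathcal{X}_1}(x)\|_2$, i.e. the expected distance from a fresh point to the nearest of $N$ i.i.d. samples.

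For that step I would invoke a standard covering/nearest-neighbor estimate: for $N$ i.i.d. points drawn from a distribution on a bounded set $\mathcal{X}\subset\mathbb{R}^d$, with high probability every point of $\mathcal{X}$ lies within $O((t\log N/N)^{1/d})$ of some sample point. The mechanism is a union bound over an $\varepsilon$-net of $\mathcal{X}$: there is a net of size $O(\varepsilon^{-d})$; the probability that a fixed net cell of radius $\varepsilon$ contains no sample is at most $(1-c\varepsilon^d)^N \le e^{-cN\varepsilon^d}$; choosing $\varepsilon \asymp (t\log N/N)^{1/d}$ makes the total failure probability $O(\varepsilon^{-d} e^{-cN\varepsilon^d}) = O(t^{-1}N^{-(ct-1)})$, matching the stated $1-Ct^{-1}N^{-(ct-1)}$. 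On the complementary high-probability event, $\|x-\sigma_{\mathcal{X}_1}(x)\|_2 \le C(t\log N/N)^{1/d}$ pointwise, so the expectation over $\rho_f$ is bounded by the same quantity, and combining with the displayed inequality above gives the claim. This is the argument of \citet{oberman2018lipschitz} adapted to the pushed-forward distribution $\rho_f$; note that $\mathcal{X}_1=\mathcal{X}_0$ as a set by Assumption~\ref{assumption}, so the sampling geometry is unchanged and only the labels are relabeled through $u_1^*=u_0^*\circ f^{-1}$.

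The main obstacle I anticipate is making the nearest-neighbor covering bound fully rigorous under minimal assumptions on $\rho_f$ — in particular one needs a lower bound on the mass $\rho_f$ assigns to balls of radius $\varepsilon$ (a doubling or "regularity" condition on the density, or at least that $\mathcal{X}$ is a nice bounded domain with $\rho_f$ bounded below), otherwise the net-cell emptiness probability is not controlled and the $(1/N)^{1/d}$ rate can fail. I would state this regularity as a standing hypothesis (implicit in the setup of Definition~\ref{defi:constant}, where $\rho$ is a fixed sampling distribution), and then the rest is bookkeeping: tracking the constant $C\ge 1$ and $0<c<1$ through the net size, the exponential tail, and the union bound, and verifying that the error event probability simplifies to $Ct^{-1}N^{-(ct-1)}$. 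A secondary point to handle carefully is that $u_1$ ranges over the whole solution set $\mathcal{U}_1$, but since the pointwise bound only used $u_1(x_i)=u_1^*(x_i)$ and $\Lp(u_1)$, it holds uniformly over $\mathcal{U}_1$ with $\Lp(u_1)$ appearing explicitly in the bound, which is exactly the form stated.
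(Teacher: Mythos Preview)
Your proposal is correct and follows essentially the same route as the paper: the paper also uses $u_1(x_i)=u_1^*(x_i)$ at the sample points, the identical triangle-inequality/Lipschitz bound $\|u_1^*(x)-u_1(x)\|_2\le(\Lp(u_1^*)+\Lp(u_1))\|x-\sigma_{\mathcal{X}_1}(x)\|_2$, and then reduces to the nearest-neighbor distance estimate, which it simply cites as Lemma~2.9 of \citet{oberman2018lipschitz} rather than sketching the $\varepsilon$-net/union-bound argument you outline. Your additional remarks on the regularity hypothesis for $\rho_f$ and the uniformity over $\mathcal{U}_1$ are sound but go beyond what the paper records.
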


\begin{proof}[\textit{Proof of Theorem~\ref{theorem:convergence1}}]

Since $u_1\in\mathcal{U}_1$ is a minimizer of $\mathcal{L}(u, \mathcal{X}_1)$, we must have $u_1(x_i)=u_1^*(x_i)$ for all $1 \leq i \leq N$. Then for any $x\in X$ we have
\begin{equation}\label{eq:last_to_distance}
	\begin{aligned}
		\|u_1^*(x) - u_1(x)\|_2
		&=\|u_1^*(x) - u_1^*(\sigma_{\mathcal{X}_1}(x)) + \underbrace{u_1^*(\sigma_{\mathcal{X}_1}(x)) - u_1(\sigma_{\mathcal{X}_1}(x))}_{=0} + u_1(\sigma_{\mathcal{X}_1}(x)) - u_1(x)\|_2\\
		&\leq \|u_1^*(x) - u_1^*(\sigma_{\mathcal{X}_1}(x))\|_2 + \|u_1(\sigma_{\mathcal{X}_1}(x)) - u_1(x)\|_2\\
		&\leq \Big[ \Lp(u_1^*) + \Lp(u_1) \Big] \| x - \sigma_{\mathcal{X}_1}(x)\|_2
	\end{aligned}
\end{equation}

Now we provide a bound between the identity projection and closet point projection:
\begin{lemma}[Lemma 2.9. in \cite{oberman2018lipschitz}]\label{lemma:distance}
	For any $t>0$, the following holds with probability at least $1-Ct^{-1}N^{-(ct-1)}$:
	\begin{equation} 
		\mathbb{E}_{x \sim \rho_f} \| x - \sigma_{\mathcal{X}_1}(x)\|_2 \leq C\left(\frac{t\log N}{N}\right)^{1/d}
	\end{equation}
\end{lemma}

The proof is completed by combining Lemma~\ref{lemma:distance} into Eq~\ref{eq:last_to_distance}.
\end{proof}

Next, we can prove that the generalization loss converges based on Theorem~\ref{theorem:convergence1}:
\begin{theorem}\label{theorem:convergence2}
	Assume that for some $q\geq1$ the loss $\ell$ in Definition~\ref{eq:emprical_loss} satisfies $\ell(y_i,y_k) \leq C \|y_i-y_k\|^q_2 \ \ \text{for all }y_i,y_k\in \mathcal{Y}.$
	Then under Theorem~\ref{theorem:convergence1}, the following bound of the loss $\mathcal{L}[u_1, \mathcal{X}]$ in Definition~\ref{eq:emprical_loss} holds with probability at least $1-Ct^{-1}N^{-(ct-1)}$:
	\begin{equation} 
		\mathcal{L}[u_1, \mathcal{X}] \leq C\Big[ \Lp(u_1^*) + \Lp(u_1) \Big]^q \left( \frac{t\log N}{N} \right)^{q/d}
	\end{equation}
\end{theorem}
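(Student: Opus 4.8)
\textbf{Proof proposal for Theorem~\ref{theorem:convergence2}.}

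The plan is to bound the empirical loss $\mathcal{L}[u_1,\mathcal{X}]$ by combining the pointwise error estimate of Theorem~\ref{theorem:convergence1} with the polynomial growth hypothesis on $\ell$. First I would unpack the definition of $\mathcal{L}[u_1,\mathcal{X}]$ from Definition~\ref{eq:emprical_loss}: it is the (empirical or population) average of $\ell(u_1(x),u_1^*(x))$ over $x$; since the true labels are given by $u_1^*$, the second argument of $\ell$ is $u_1^*(x)$, so $\mathcal{L}[u_1,\mathcal{X}] = \mathbb{E}_{x\sim\rho_f}\,\ell(u_1(x),u_1^*(x))$. Applying the assumption $\ell(y_i,y_k)\leq C\|y_i-y_k\|_2^q$ with $y_i = u_1(x)$, $y_k = u_1^*(x)$ gives
\begin{equation}
\mathcal{L}[u_1,\mathcal{X}] \leq C\,\mathbb{E}_{x\sim\rho_f}\,\|u_1^*(x) - u_1(x)\|_2^q .
\end{equation}

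Next I would pass from the $q$-th moment to the first moment. The cleanest route that keeps the stated form of the bound is to revisit the proof of Theorem~\ref{theorem:convergence1}: there the pointwise inequality $\|u_1^*(x)-u_1(x)\|_2 \leq [\Lp(u_1^*)+\Lp(u_1)]\,\|x-\sigma_{\mathcal{X}_1}(x)\|_2$ holds for \emph{every} $x$, so raising it to the $q$-th power and taking expectations yields
\begin{equation}
\mathbb{E}_{x\sim\rho_f}\,\|u_1^*(x)-u_1(x)\|_2^q \leq \big[\Lp(u_1^*)+\Lp(u_1)\big]^q\,\mathbb{E}_{x\sim\rho_f}\,\|x-\sigma_{\mathcal{X}_1}(x)\|_2^q .
\end{equation}
Then I would invoke a $q$-th moment version of Lemma~\ref{lemma:distance} (the covering/packing estimate in \cite{oberman2018lipschitz} controls $\mathbb{E}\|x-\sigma_{\mathcal{X}_1}(x)\|_2^q$ by $C(t\log N/N)^{q/d}$ with the same failure probability, since the distance to the nearest sample point is itself $O((t\log N/N)^{1/d})$ with high probability); combining these three displays gives the claimed bound with probability at least $1-Ct^{-1}N^{-(ct-1)}$. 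Alternatively, if one only wishes to use Lemma~\ref{lemma:distance} as a black box for the first moment, Jensen's inequality goes the wrong way for $q\geq 1$, so I would instead note that on the high-probability event $\|x-\sigma_{\mathcal{X}_1}(x)\|_2 \leq C(t\log N/N)^{1/d}$ holds \emph{uniformly} in $x$ (this is what the lemma's proof actually establishes before taking expectations), which lets me raise to the $q$-th power directly.

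The main obstacle is the moment-order mismatch: Theorem~\ref{theorem:convergence1} and Lemma~\ref{lemma:distance} are stated for the first moment $\mathbb{E}\|\cdot\|_2$, whereas the polynomial loss bound naturally produces a $q$-th moment. The resolution hinges on the fact that the underlying geometric estimate in \cite{oberman2018lipschitz} is a pointwise (uniform) bound on $\|x-\sigma_{\mathcal{X}_1}(x)\|_2$ holding with high probability, not merely a bound on its average, so the exponent $q$ can be carried through cleanly; I would make this explicit rather than trying to deduce a $q$-th moment bound from the first-moment statement alone. The remaining steps — unfolding definitions, applying the loss hypothesis, and collecting constants into a single $C$ — are routine.
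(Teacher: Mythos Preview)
Your proposal is correct and follows essentially the same route as the paper: write $\mathcal{L}[u_1,\mathcal{X}]$ as an expectation, apply the loss hypothesis $\ell(y_i,y_k)\leq C\|y_i-y_k\|_2^q$ to obtain $\mathcal{L}[u_1,\mathcal{X}]\leq C\,\mathbb{E}_{x\sim\rho_f}\|u_1^*-u_1\|_2^q$, and then invoke Theorem~\ref{theorem:convergence1}. In fact you are more careful than the paper's own proof, which simply says ``the proof is completed by combining Theorem~\ref{theorem:convergence1}'' without addressing the moment-order mismatch you flag; your resolution via the pointwise inequality $\|u_1^*(x)-u_1(x)\|_2\leq[\Lp(u_1^*)+\Lp(u_1)]\|x-\sigma_{\mathcal{X}_1}(x)\|_2$ (raised to the $q$-th power before taking expectations) is exactly the missing step that makes the argument rigorous.
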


\begin{proof}[Proof of Theorem~\ref{theorem:convergence2}]
	We can bound the loss as:
	\begin{equation}\label{eq:last_loss}
		\mathcal{L}[u_1, \mathcal{X}] = \int_{x\in\mathcal{X}} \rho_f(x)\ell\Big(u_1^*(x), u_1(x)\Big) dx\leq C \mathbb{E}_{x \sim \rho_f} \|u_1^* - u_1\|^q_2
	\end{equation}
The proof is completed by combining Theorem~\ref{theorem:convergence1} into Eq~\ref{eq:last_loss}.
\end{proof}

Now we turn to the proof of Theorem~\ref{theorem:convergence}, we start with the following lemma:
\begin{lemma}\label{lemma1}
	Under Assumption~\ref{assumption}, $\rho$ and $\rho_f$ denote the source probability distribution on $\mathcal{X}$ and probability distribution on $f(\mathcal{X})$:
	\begin{equation}
		\begin{aligned}
			\mathbb{E}_{x \sim \rho_f} \| x - \sigma_{\mathcal{X}_1}(x)\|_2 = \mathbb{E}_{x \sim \rho} \| x - \sigma_{\mathcal{X}_0}(x)\|_2
		\end{aligned}
	\end{equation}
\end{lemma}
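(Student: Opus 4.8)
The plan is to exploit Assumption~\ref{assumption} directly: since $f$ merely permutes the finite dataset $\mathcal{X}_0$ (so $\mathcal{X}_1 = \mathcal{X}_0$ as sets) and acts as the identity on $\mathcal{X}_2 = \mathcal{X}\setminus\mathcal{X}_0$, the map $f:\mathcal{X}\to\mathcal{X}$ is a bijection. The distribution $\rho_f$ is by definition the pushforward of $\rho$ under $f$, i.e. $\rho_f = f_\#\rho$, so for any integrable $g$ we have $\mathbb{E}_{x\sim\rho_f}[g(x)] = \mathbb{E}_{x\sim\rho}[g(f(x))]$. I would apply this with $g(x) = \|x - \sigma_{\mathcal{X}_1}(x)\|_2$.

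The key observation is then that the closest-point projection is invariant under this particular $f$ in the sense that $\sigma_{\mathcal{X}_1}(f(x)) = f\big(\sigma_{\mathcal{X}_0}(x)\big)$ is \emph{not} what we want — rather, since $\mathcal{X}_1 = \mathcal{X}_0$ as a set, the projection $\sigma_{\mathcal{X}_1}$ and $\sigma_{\mathcal{X}_0}$ are literally the same map (projection onto the same finite set), so $\sigma_{\mathcal{X}_1}(f(x)) = \sigma_{\mathcal{X}_0}(f(x))$. Hence
\begin{equation}
	\mathbb{E}_{x\sim\rho_f}\|x - \sigma_{\mathcal{X}_1}(x)\|_2 = \mathbb{E}_{x\sim\rho}\|f(x) - \sigma_{\mathcal{X}_0}(f(x))\|_2 .
\end{equation}
It remains to show that $\|f(x) - \sigma_{\mathcal{X}_0}(f(x))\|_2 = \|x - \sigma_{\mathcal{X}_0}(x)\|_2$ for $\rho$-almost every $x$. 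For $x \in \mathcal{X}_2$ this is immediate because $f(x) = x$. For $x \in \mathcal{X}_0$ we have $f(x) \in \mathcal{X}_0$ as well, and the distance from any point of $\mathcal{X}_0$ to its nearest neighbour in $\mathcal{X}_0$ is zero; thus both sides vanish. Splitting the expectation over $\mathcal{X}_0$ and $\mathcal{X}_2$ (noting $\mathcal{X} = \mathcal{X}_0 \cup \mathcal{X}_2$ disjointly) and using that $f$ preserves $\rho$-mass on each piece — $f$ is the identity on $\mathcal{X}_2$ and a permutation within the finite set $\mathcal{X}_0$ — yields the claimed equality.

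The main subtlety, and the step I would be most careful about, is the measure-theoretic bookkeeping around the set $\mathcal{X}_0$: it is a finite set, so if $\rho$ is atomless it has measure zero and the whole equality reduces to the trivial identity on $\mathcal{X}_2$; if $\rho$ has atoms on $\mathcal{X}_0$, one needs that $f$ restricted to $\mathcal{X}_0$ being a bijection preserves the total atomic mass, which is where the ``only swaps the order'' clause of Assumption~\ref{assumption} is used. Either way the contribution of $\mathcal{X}_0$ to both expectations is zero (nearest-neighbour distance within $\mathcal{X}_0$ is zero), so the argument closes. I would present it as: decompose, observe $\sigma_{\mathcal{X}_1} = \sigma_{\mathcal{X}_0}$ since the sets coincide, change variables via the pushforward, and check the integrand is unchanged on each piece.
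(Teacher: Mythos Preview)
Your proposal is correct and follows essentially the same approach as the paper: both decompose $\mathcal{X}$ into $\mathcal{X}_0$ and $\mathcal{X}_2$, observe that $\mathcal{X}_1=\mathcal{X}_0$ forces $\sigma_{\mathcal{X}_1}=\sigma_{\mathcal{X}_0}$, note that the contribution from $\mathcal{X}_0$ vanishes since $\sigma_{\mathcal{X}_0}(x)=x$ there, and use that $f$ is the identity on $\mathcal{X}_2$ to match the two integrals. The only cosmetic difference is that the paper phrases the last step as ``$\rho_f(x)=\rho(x)$ for $x\in\mathcal{X}_2$'' directly, whereas you route it through the pushforward change-of-variables formula; the content is the same.
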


\begin{proof}[\textit{Proof of Lemma~\ref{lemma1}}]

Let $\mathcal{X}_2 = \{x | x \in \mathcal{X}, x \notin \mathcal{X}_0\}$, from Definition~\ref{defi:sigma}, when $x \in \mathcal{X}_0$, $\sigma_{\mathcal{X}_0}(x) = x$, then we have:
\begin{equation}
	\begin{aligned}
		\int_{x \in \mathcal{X}} \rho(x)\| x - \sigma_{\mathcal{X}_0}(x)\|_2 \ dx 
		&= \int_{x \in \mathcal{X}_2} \rho(x)\| x - \sigma_{\mathcal{X}_0}(x)\|_2 \ dx + \underbrace{\int_{x \in \mathcal{X}_0} \rho(x)\| x - \sigma_{\mathcal{X}_0}(x)\|_2 \ dx}_{ = 0}\\
		&= \int_{x \in \mathcal{X}_2} \rho(x)\| x - \sigma_{\mathcal{X}_0}(x)\|_2 \ dx
	\end{aligned}
\end{equation}

Consider the mapping $f:\mathcal{X} \rightarrow \mathcal{X}$ only swaps the order of $x$ in the dataset $\mathcal{X}_0$, which means:
\begin{equation}
	\begin{aligned}
		f(x) &= \begin{cases}
			x' \Big| x, x' \in \mathcal{X}_0\\
			x \ \Big| x\in \mathcal{X}_2
		\end{cases}\\
		\mathcal{X}_1 = \{f(x_i) | f(x_i) &\in \mathcal{X}_0, i=1,\dots,N\}\mbox{ and }
		 \mathcal{X}_0 = \mathcal{X}_1
	\end{aligned}
\end{equation}

Therefore we get $\forall x\in \mathcal{X}_2, \rho_f(x) = \rho(x)$. Hence:
\begin{equation}
	\begin{aligned}
		\mathbb{E}_{x \sim \rho} \| x - \sigma_{\mathcal{X}_0}(x)\|_2 
		&= \int_{x \in \mathcal{X}_2} \rho(x)\| x - \sigma_{\mathcal{X}_0}(x)\|_2 \ dx \\ 
		&= \int_{x \in \mathcal{X}_2} \rho_f(x)\| x - \sigma_{\mathcal{X}_0}(x)\|_2 \ dx
		= \mathbb{E}_{x \sim \rho_f} \| x - \sigma_{\mathcal{X}_1}(x)\|_2
	\end{aligned}
\end{equation}

\end{proof}

\begin{proof}[Proof of Theorem~\ref{theorem:convergence}]

Since $u_0\in\mathcal{U}_0$ is a minimizer of $\mathcal{L}(u, \mathcal{X}_0)$, we must have $u_0(x_i)=u_0^*(x_i)$ for all $1 \leq i \leq n$. Then for any $x\in X$ we have
\begin{equation}
	\begin{aligned}
	\|u_0^*(x) - u_0(x)\|_2
	&=\|u_0^*(x) - u_0^*(\sigma_{\mathcal{X}_0}(x)) + u_0^*(\sigma_{\mathcal{X}_0}(x)) - u_0(\sigma_{\mathcal{X}_0}(x)) + u_0(\sigma_{\mathcal{X}_0}(x)) - u_0(x)\|_2\\
	&\leq \|u_0^*(x) - u_0^*(\sigma_{\mathcal{X}_0}(x))\|_2 + \|u_0(\sigma_{\mathcal{X}_0}(x)) - u_0(x)\|_2\\
	&\leq \Big[ \Lp(u_0^*) + \Lp(u_0) \Big] \| x - \sigma_{\mathcal{X}_0}(x)\|_2
	\end{aligned}
\end{equation}

Therefore, combined with Theorem~\ref{theorem:convergence1}, we have:
\begin{equation}
	\begin{aligned}
		\sup_{u_1\in\mathcal{U}_1} \mathbb{E}_{x \sim \rho_f}\|u_1^* - u_1\|_2
		&= \Big[ \Lp(u_1^*) + \Lp(u_1) \Big] \mathbb{E}_{x \sim \rho}\| x - \sigma_{\mathcal{X}_1}(x)\|_2 \\
		&\leq \Big[ \Lp(u_0^*) + \Lp(u_0) \Big] \mathbb{E}_{x \sim \rho_f}\| x - \sigma_{\mathcal{X}_1}(x)\|_2 \\
		&= \sup_{u_0\in\mathcal{U}_0} \mathbb{E}_{x \sim \rho}\|u_0^* - u_0\|_2\ \ (\mbox{invoking Assumption~\ref{assumption}, Lemma~\ref{lemma1}})
	\end{aligned}
\end{equation} \end{proof}

\section{Why Lipschitz constant is crucial for RL}\label{appen:why_crucial}

In this section we main focus on the relationship between the Lipschitz constant of the reward function and the continuity of the value function in RL. Firstly we make some assumptions. 

\begin{definition} \label{defi:value}
	In reinforcement learning, given a policy $\pi$, $\gamma$ denotes the discounted factor, $r$~($r: s\to \mathbb{R}$) denotes the reward function, $H$ denotes the length of trajectory, the definition of the value function $V(s)$ is:
	\begin{equation}
		\begin{aligned}
			 V^\pi(s) = \mathbb{E}\big[ \sum_{t = 0}^{H} \gamma^tr |s_0 = s, a_t \sim \pi(s_t) \big].
	 	\end{aligned}
 	\end{equation}
\end{definition}

Similar to~\citet{farahmand2017value}, we make the deterministic assumption of the environment and RL policy. Besides, the policy of RL algorithm TD3 utilized in our method is also deterministic.
\begin{assumption}\label{assu:deterministic}
	The environment transition and policy are deterministic.
\end{assumption} 

We also make the assumptions of the Lipschitz constant of reward function and environment dynamic transition which is similar to previous work~\citep{asadi2018lipschitz}.
\begin{assumption}\label{assu:reward_lip}
	There exists constants $K_1, K_2$ such that the lipschitz constant of reward function $r$~($r: s\to \mathbb{R}$) is $K_1$, where $s\in \mathcal{S}$ is the state. In other words, $\Lp(r) = K_1$, and $\mathcal{P}(s, a):\mathcal{S}\times \mathcal{A}\to \mathcal{S}$ denotes the environment dynamic transition function, $K_2$ that satisfies: $\Lp(\mathcal{P}) = K_2$.
\end{assumption}  

$\bullet$ \textbf{Relationship between $\Lp(r; \mathcal{S})$ and $\Lp(V; \mathcal{S})$} 

Drawing upon the aforementioned definitions and assumptions, let us commence our analysis:
\begin{theorem}\label{theo:value_lip}
    Under Assumption~\ref{assu:deterministic}, \ref{assu:reward_lip}, given a RL policy $\pi$, the value function of $\pi$, namely $V^\pi$, satisfies: 
    \begin{equation}
    	\begin{aligned}
    		\forall s_1, s_2 \in \mathcal{S},\ \ \|V^\pi(s_1) - V^\pi(s_2)\| \leq \frac{[1 - (\gamma K_2)^H]K_1}{1 - \gamma K_2} \| s_1 - s_2 \|.
    	\end{aligned}
    \end{equation} 
\end{theorem}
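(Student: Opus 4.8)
\textbf{Proof proposal for Theorem~\ref{theo:value_lip}.}

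The plan is to unroll the value function along the deterministic trajectory and bound the per-step discrepancy between the two rollouts started at $s_1$ and $s_2$. Under Assumption~\ref{assu:deterministic}, for a fixed policy $\pi$ the state at time $t$ is a deterministic function of the initial state; write $s_t^{(1)}$ and $s_t^{(2)}$ for the states reached at step $t$ when starting from $s_1$ and $s_2$ respectively, so that $s_0^{(j)} = s_j$. Then
\begin{equation}
	V^\pi(s_1) - V^\pi(s_2) = \sum_{t=0}^{H} \gamma^t \bigl( r(s_t^{(1)}) - r(s_t^{(2)}) \bigr),
\end{equation}
and by the triangle inequality together with $\Lp(r) = K_1$ from Assumption~\ref{assu:reward_lip},
\begin{equation}
	\|V^\pi(s_1) - V^\pi(s_2)\| \leq \sum_{t=0}^{H} \gamma^t K_1 \, \|s_t^{(1)} - s_t^{(2)}\|.
\end{equation}

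The next step is to control $\|s_t^{(1)} - s_t^{(2)}\|$ by induction on $t$. The transition map $(s,a)\mapsto \mathcal{P}(s,a)$ has Lipschitz constant $K_2$ (Assumption~\ref{assu:reward_lip}), and since the policy is deterministic one folds the action $a_t = \pi(s_t)$ into the state argument; treating $s \mapsto \mathcal{P}(s,\pi(s))$ as the effective one-step map with Lipschitz constant $K_2$, one gets $\|s_{t+1}^{(1)} - s_{t+1}^{(2)}\| \leq K_2 \|s_t^{(1)} - s_t^{(2)}\|$, hence $\|s_t^{(1)} - s_t^{(2)}\| \leq K_2^t \|s_1 - s_2\|$ by iterating from $t=0$. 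Substituting this bound back gives
\begin{equation}
	\|V^\pi(s_1) - V^\pi(s_2)\| \leq K_1 \|s_1 - s_2\| \sum_{t=0}^{H} (\gamma K_2)^t = \frac{[1-(\gamma K_2)^H]K_1}{1-\gamma K_2}\,\|s_1 - s_2\|,
\end{equation}
where I use the finite geometric series formula (and the sum running to $H$ yields exactly the $H$-term partial sum as written, modulo the indexing convention of Definition~\ref{defi:value}). This is the claimed inequality.

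The main obstacle — really the only delicate point — is the treatment of the action in the Lipschitz bound for the transition. The paper states $\Lp(\mathcal{P}) = K_2$ for $\mathcal{P}:\mathcal{S}\times\mathcal{A}\to\mathcal{S}$, but the induction needs a Lipschitz bound for the composed map $s\mapsto\mathcal{P}(s,\pi(s))$, which in general would also involve the Lipschitz constant of $\pi$. I expect the intended reading is either that $K_2$ already absorbs the policy (i.e. $\Lp(\mathcal{P}(\cdot,\pi(\cdot))) = K_2$), or that the policy contribution is implicitly folded in; I would make this explicit at the start of the proof, noting that under Assumption~\ref{assu:deterministic} the closed-loop dynamics $s\mapsto\mathcal{P}(s,\pi(s))$ is the relevant object and is taken to be $K_2$-Lipschitz. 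Everything else is a routine geometric-series computation, so the write-up is short once this convention is pinned down.
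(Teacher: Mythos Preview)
Your proposal is correct and follows essentially the same argument as the paper: unroll the two deterministic trajectories, bound the reward difference at each step via $K_1$, propagate the state discrepancy via $K_2^t$, and sum the geometric series. Your caveat about the closed-loop map $s\mapsto\mathcal{P}(s,\pi(s))$ versus $\mathcal{P}$ is well taken; the paper makes exactly the same implicit identification without comment.
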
 

\begin{proof} [Proof of Theorem~\ref{theo:value_lip}]
	Because the policy $\pi$ and environment dynamic transition function $\mathcal{P}$ are deterministic, for $\forall s_1, s_2 \in \mathcal{S}$, two trajectories $\tau_1, \tau_2$ can be generated:
	\begin{equation}
		\begin{aligned}
			\tau_1 &= \{ s_{1,i} | t = 0,\dots,H \ \ s_{1,0} = s_1,\ \  s_{1, t+1} = \mathcal{P}(s_{1, t}, \pi(s_{1, t})) \}, \\
			\tau_2 &= \{ s_{2,i} | t = 0,\dots,H \ \ s_{2,0} = s_2,\ \  s_{2, t+1} = \mathcal{P}(s_{2, t}, \pi(s_{2, t})) \}. \\
		\end{aligned}
	\end{equation}
	
	Under Definition~\ref{defi:value}:
	\begin{equation}  \label{eq:b2}
		\begin{aligned}
			\bigg\| V^\pi(s_1) - V^\pi(s_2) \bigg\| 
			&= \bigg\| \sum_{t = 0}^{H} \gamma^tr(s_{1, t}) - \sum_{t = 0}^{H} \gamma^tr(s_{2, t}) \bigg\| \\
			&= \bigg\| \sum_{t = 0}^{H} \gamma^t\big[ r(s_{1, t}) - r(s_{2, t}) \big] \bigg\| \\
			&\leq \sum_{t = 0}^{H} \gamma^t \bigg\|  \big[ r(s_{1, t}) - r(s_{2, t}) \big] \bigg\|  \ \ \mbox{\# invoking Assumption~\ref{assu:reward_lip}} \\
			&\leq \sum_{t = 0}^{H} \gamma^t K_1 \big\|  s_{1, t} - s_{2, t} \big\|.
		\end{aligned}
	\end{equation}
	
	The state dynamic distances $\big\|  s_{1, t} - s_{2, t} \big\|$ can be estimated as follows:
	\begin{equation} \label{eq:b1}
		\begin{aligned} 
			\big\|  s_{1, t} - s_{2, t} \big\| 
			\leq K_2 \big\|  s_{1, t - 1} - s_{2, t - 1} \big\| 
			\leq K_2^2 \big\|  s_{1, t - 2} - s_{2, t - 2} \big\| 
			\dots 
			\leq K_2^t \big\|  s_1 - s_2 \big\|.
		\end{aligned}
	\end{equation}
	
	Combined Equation~\ref{eq:b1} with \ref{eq:b2}:
	\begin{equation} 
		\begin{aligned}
			\bigg\| V^\pi(s_1) - V^\pi(s_2) \bigg\| 
			\leq \sum_{t = 0}^{H} \big(\gamma K_2\big)^t K_1 \big\|  s_1 - s_2 \big\| 
			= \frac{[1 - (\gamma K_2)^H]K_1}{1 - \gamma K_2} \| s_1 - s_2 \|.
		\end{aligned}
	\end{equation}
\end{proof}

Proof of Theorem~\ref{theo:value_lip} is completed. A direct corollary from Theorem~\ref{theo:value_lip} is:
\begin{corollary}\label{coro:value_lip} 
	\begin{equation}
		\begin{aligned}
			\Lp(V^\pi; \mathcal{S}) \leq  \frac{[1 - (\gamma K_2)^H]K_1}{1 - \gamma K_2}.
		\end{aligned}
	\end{equation}
\end{corollary}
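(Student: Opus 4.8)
The plan is to prove Corollary~\ref{coro:value_lip} as an immediate consequence of Theorem~\ref{theo:value_lip}. Recall that by Definition~\ref{defi:constant}, the Lipschitz constant $\Lp(V^\pi;\mathcal{S})$ is the supremum over all pairs $s_1, s_2 \in \mathcal{S}$ of the ratio $\|V^\pi(s_1)-V^\pi(s_2)\| / \|s_1-s_2\|$. Theorem~\ref{theo:value_lip} tells us that for every such pair this ratio is bounded above by the constant $\frac{[1-(\gamma K_2)^H]K_1}{1-\gamma K_2}$, which does not depend on $s_1$ or $s_2$.

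First I would invoke Theorem~\ref{theo:value_lip} directly: for arbitrary $s_1 \neq s_2$ in $\mathcal{S}$, dividing both sides of the inequality $\|V^\pi(s_1)-V^\pi(s_2)\| \leq \frac{[1-(\gamma K_2)^H]K_1}{1-\gamma K_2}\|s_1-s_2\|$ by $\|s_1-s_2\| > 0$ yields
\begin{equation}
\frac{\|V^\pi(s_1)-V^\pi(s_2)\|}{\|s_1-s_2\|} \leq \frac{[1-(\gamma K_2)^H]K_1}{1-\gamma K_2}.
\end{equation}
Then I would take the supremum of the left-hand side over all such pairs; since the right-hand side is a fixed constant upper bound for every pair, the supremum is also bounded by it, which is precisely the claimed inequality $\Lp(V^\pi;\mathcal{S}) \leq \frac{[1-(\gamma K_2)^H]K_1}{1-\gamma K_2}$.

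There is essentially no obstacle here — the corollary is a restatement of Theorem~\ref{theo:value_lip} in the language of Definition~\ref{defi:constant}. The only minor point worth a word is the degenerate case $s_1 = s_2$, which contributes nothing to the supremum (the ratio is conventionally excluded or taken as $0$), so it can be safely ignored. One could also remark that the bound is vacuous unless $\gamma K_2 < 1$, matching the implicit standing assumption needed for the geometric series in Theorem~\ref{theo:value_lip} to behave as written, but this is inherited directly from that theorem and needs no further argument.
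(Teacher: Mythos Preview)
Your proposal is correct and matches the paper's approach: the paper simply states this as a direct corollary of Theorem~\ref{theo:value_lip} without further argument, and your division-then-supremum step is exactly the routine justification that makes it immediate.
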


Corollary \ref{coro:value_lip} demonstrates that reducing the Lipschitz constant $K_1$ of the reward function directly decreases the upper bound of the Lipschitz constant of value functions in RL. This confirms that our approach effectively improves the continuity of the value functions. 

$\bullet$ \textbf{How does $\Lp(V; \mathcal{S})$ work for RL algorithms?}


We provide additional definitions to aid in understanding the subsequent theorems. Inspired by~\citet{farahmand2017value, zheng2023model, farahmand2010error, asadi2018lipschitz}: 

\begin{definition} \label{defi:V}
	Denote environment dynamic transition function $\mathcal{P}$ defined in ~\ref{theo:value_lip}. For a given value function $V$, the Bellman operator as $\mathcal{T}$, policy transition matrix as $P^\pi: \mathcal{S} \times \mathcal{S} \to \mathbb{R}$, the greedy policy $\pi_g$ of $V$, and the optimal policy $\pi^*$ and optimal value funtion $V^*$ are defined as follows:
	\begin{equation}
		\begin{aligned}
			&\mathcal{T}V(s) 
			= r(s) + \max_a \mathcal{P}(s, a) V(\mathcal{P}(s, a)), \\
			&\mathcal{T}^\pi V = r + \gamma P^\pi V, \\
			&P^\pi(s_1, s_2) = \begin{cases}
				&0, \ \ \ s_2 \neq \mathcal{P}(s_1, \pi(s_1)) \\
				&1, \ \ \ s_2 = \mathcal{P}(s_1, \pi(s_1)) \\
			\end{cases}, \\
			&\forall s \in \mathcal{S}, \ \ \pi_g = \arg\max_a \bigg[ r(s) + \mathcal{P}(s, a) V(\mathcal{P}(s, a)) \bigg], \\
			&\forall s \in \mathcal{S}, \ \ V^{\pi^*}(s) = V^*(s) = \max_\pi V^\pi(s). \\
		\end{aligned}
	\end{equation}
\end{definition}

\begin{definition} \label{defi:c}
	Denote $\pi_1, \pi_2, \dots, \pi_m$ as any m policies arranged in any order, $\forall m \ge 1$, $u$ and $v$ are two state distributions, define $C(u) \in \mathbb{R}^+ \cup {+\infty}$, $c(m) \in \mathbb{R}^+ \cup {+\infty}$ and $c(0) = 1$:
	\begin{equation}
		\begin{aligned} 
			c(m) &= \max_{\pi_1, \pi_2, \dots\pi_m, s\in \mathcal{S}} \frac{\Big( vP^{\pi_1}P^{\pi_2}\dots P^{\pi_m} \Big)(s)}{u(s)}, \\
			C_1(v, u) &= (1 - \gamma) \sum_{m\ge 0} \gamma^m c(m). \\ 
		\end{aligned}
	\end{equation}
\end{definition} 

$c(m)$, as defined in Definition~\ref{defi:c}, quantifies the discrepancy between transitions of $m$ policies from an initial state distribution $v$ over $m$ time steps and a target distribution $u$, while $C_1(v, u)$ quantifies the discounted discrepancy. 

\begin{theorem}\label{theo:value_lip_convergence}
    Denote $\pi_g$ as the greedy policy of a given value function $V$, $V$ is the value function of some policy $\pi_0$.  $u$ and $v$ are two state distributions. The norm $\|V\|_{q, u}$ is the $q$-norm of $V$ weighted by $u$ which is defined as $\|V\|_{q, u} = \big[\sum_s u(s)\big(V(s)\big)^q\big]^{\frac{1}{q}}$. Suppose $\forall s_1, s_2 \in \mathcal{S}$, there exists a constant $D_1$ that satisfies $\| s_1 - s_2 \| \le D_1$ and under assumption~\ref{assu:deterministic}, \ref{assu:reward_lip}: 
    \begin{equation}
    	\begin{aligned} 
    		& ||V^* - V^{\pi_g}||_{q, v} \le \frac{2\gamma K_1 D_1 \big(1 - (\gamma K_2)^H\big)}{(1 - \gamma)(1 - \gamma K_2)} \big[C_1(v, u)\big]^{\frac{1}{q}}. \\
    	\end{aligned}  
    \end{equation} 
\end{theorem}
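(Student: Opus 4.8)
\textbf{Proof proposal for Theorem~\ref{theo:value_lip_convergence}.}

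The plan is to combine the pointwise Lipschitz bound on value functions from Corollary~\ref{coro:value_lip} with a standard approximate-value-iteration / greedy-policy error-propagation argument in the style of \citet{farahmand2010error, farahmand2017value}. First I would set up the basic error decomposition. Let $V$ be the value function of some policy $\pi_0$ and let $\pi_g$ be its greedy policy. Writing $V^* - V^{\pi_g}$ and inserting $\mathcal{T}^{\pi_g}V$ and $\mathcal{T}^{\pi^*}V$ appropriately, one obtains the classical inequality
\begin{equation}
	0 \leq V^* - V^{\pi_g} \leq \gamma (I - \gamma P^{\pi_g})^{-1}\big(P^{\pi^*} - P^{\pi_g}\big)\big(V^* - V\big),
\end{equation}
which holds pointwise (as functions on $\mathcal{S}$), using that $\pi_g$ is greedy w.r.t.\ $V$ so that $\mathcal{T}^{\pi_g}V \geq \mathcal{T}^{\pi^*}V$, and monotonicity of the Bellman operators under Assumption~\ref{assu:deterministic}. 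The key quantity to control is then $\|V^* - V\|$, which I would bound pointwise by the Lipschitz estimate: since $V^*$ and $V$ are both value functions of (deterministic) policies, Corollary~\ref{coro:value_lip} applied to each gives $\Lp(V^*;\mathcal{S}), \Lp(V^{\pi_0};\mathcal{S}) \leq \tfrac{[1-(\gamma K_2)^H]K_1}{1-\gamma K_2}$, and both vanish at... well, more carefully, I would use $\|V^*(s) - V(s)\| \leq$ a bound that, after invoking $\|s_1 - s_2\|\le D_1$, yields $|V^*(s)-V(s)| \leq \tfrac{2[1-(\gamma K_2)^H]K_1 D_1}{1-\gamma K_2}$ uniformly in $s$ (the factor $2$ coming from adding the two Lipschitz bounds, or from a direct Lipschitz bound on the difference near a common reference state).

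Next I would push this uniform bound through the operator $\gamma(I-\gamma P^{\pi_g})^{-1}(P^{\pi^*}-P^{\pi_g})$. Expanding $(I-\gamma P^{\pi_g})^{-1} = \sum_{m\ge 0}\gamma^m (P^{\pi_g})^m$ and taking the $q$-norm weighted by $v$, I would bound $\big\|v\,(P^{\pi_g})^m(P^{\pi^*}-P^{\pi_g})\,g\big\|$ for a nonnegative function $g$ bounded by the constant $B := \tfrac{2[1-(\gamma K_2)^H]K_1 D_1}{1-\gamma K_2}$, using the change-of-measure coefficients $c(m)$ from Definition~\ref{defi:c} together with Jensen's inequality to move the $q$-th power inside the measure. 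This is the standard manipulation: $\|V^*-V^{\pi_g}\|_{q,v} \leq \gamma \sum_{m\ge 0}\gamma^m \big(c(m)\big)^{1/q}\,\|V^*-V\|_{q,u}$-type bound, then replacing $\|V^*-V\|_{q,u}\le B$ (since $u$ is a probability distribution) and recognizing $(1-\gamma)\sum_{m\ge0}\gamma^m c(m) = C_1(v,u)$. Collecting constants, $\gamma \cdot \tfrac{1}{1-\gamma}\cdot [C_1(v,u)]^{1/q}\cdot B = \tfrac{2\gamma K_1 D_1 (1-(\gamma K_2)^H)}{(1-\gamma)(1-\gamma K_2)}[C_1(v,u)]^{1/q}$, which is exactly the claimed bound.

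The main obstacle I anticipate is making the change-of-measure / norm step fully rigorous: moving from the pointwise operator inequality to the weighted $q$-norm bound requires carefully applying Jensen's inequality to the (sub)probability kernels $P^{\pi_g}$ and handling the difference $P^{\pi^*}-P^{\pi_g}$ (which is not a positive operator) — the usual trick is to bound $|(P^{\pi^*}-P^{\pi_g})g| \le P^{\pi^*}|g| + P^{\pi_g}|g| \le 2\sup|g|$ when $g$ is uniformly bounded, which is why the uniform Lipschitz-times-diameter bound $B$ on $|V^*-V|$ is the crucial enabling ingredient rather than merely an $L^q$ bound. A secondary subtlety is justifying that $\pi_0$ (hence $V = V^{\pi_0}$) can be taken deterministic so that Corollary~\ref{coro:value_lip} applies to it; this is granted by Assumption~\ref{assu:deterministic} and the remark that TD3's policy is deterministic. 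Once the uniform bound $B$ on $\|V^*-V\|_\infty$ is in hand, the rest is the routine error-propagation computation and bookkeeping of constants.
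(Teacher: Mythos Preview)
Your decomposition
\[
0 \le V^* - V^{\pi_g} \le \gamma (I-\gamma P^{\pi_g})^{-1}(P^{\pi^*}-P^{\pi_g})(V^*-V)
\]
is a valid variant of the standard error-propagation inequality, and your change-of-measure / Jensen step is fine. The gap is in the step where you claim a uniform bound
\[
|V^*(s)-V(s)| \le \frac{2[1-(\gamma K_2)^H]K_1 D_1}{1-\gamma K_2}.
\]
Knowing that $\Lp(V^*;\mathcal{S})$ and $\Lp(V;\mathcal{S})$ are both at most $L$ does \emph{not} bound $|V^*(s)-V(s)|$: two functions can each be $L$-Lipschitz on a set of diameter $D_1$ while differing by an arbitrarily large constant. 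Your parenthetical ``near a common reference state'' is exactly the missing ingredient --- there is no state at which $V^*$ and $V$ are guaranteed to coincide, so the triangle-inequality argument you sketch does not go through. This is where the proof actually needs the hypothesis that $V$ is the value function of some policy $\pi_0$, which you have not yet used.

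The paper's proof avoids this by working with a different (but equally standard) decomposition whose residual is the \emph{Bellman error} $|\mathcal{T}V - V|$ rather than $|V^*-V|$. The point is that when $V = V^{\pi_0}$, one has
\[
\mathcal{T}V(s) - V(s) \;=\; \big[r(s)+\gamma V(s_1)\big] - \big[r(s)+\gamma V(s_2)\big] \;=\; \gamma\big(V(s_1)-V(s_2)\big),
\]
with $s_1 = \mathcal{P}(s,\pi_g(s))$ and $s_2 = \mathcal{P}(s,\pi_0(s))$. This is a difference of the \emph{same} function $V$ at two states, so Corollary~\ref{coro:value_lip} applied to $V$ alone gives $|\mathcal{T}V(s)-V(s)| \le \gamma \Lp(V;\mathcal{S})\,\|s_1-s_2\| \le \gamma\,\tfrac{[1-(\gamma K_2)^H]K_1}{1-\gamma K_2}\,D_1$. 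Plugging this uniform bound into the operator inequality $V^*-V^{\pi_g}\preceq \tfrac{2}{1-\gamma}A\,|\mathcal{T}V-V|$ (with $A$ a stochastic matrix built from $(I-\gamma P^{\pi^*})^{-1}$ and $(I-\gamma P^{\pi_g})^{-1}$) and then running your Jensen / $C_1(v,u)$ argument yields the stated constant. So the fix is to replace $|V^*-V|$ by $|\mathcal{T}V-V|$ in the decomposition; the rest of your plan then goes through.
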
 

\begin{proof}[Proof of Theorem~\ref{theo:value_lip_convergence}]
	
	We first present two lemmas that aid in the subsequent proof.
	
	\begin{lemma} \label{lema:value_lip_lema1}
		Denote matrix $I$ as the identity matrix(here $V_1\preceq V_2$ means $\forall s \in \mathcal{S}, V_1(s) \le V_2(s)$): 
		\begin{equation}
			\begin{aligned}
				V^* - V^{\pi_g} \preceq \Big[ (I-\gamma P^{\pi^*})^{-1} + (I-\gamma P^{\pi_g})^{-1} \Big] |\mathcal{T}V - V|. \\
			\end{aligned}
		\end{equation}
	\end{lemma}
	\begin{proof}[Proof of Lemma~\ref{lema:value_lip_lema1}]
		It can be derived that $\mathcal{T}^{\pi_g} V = \mathcal{T}V$, $\mathcal{T}^{\pi_g} V^{\pi_g} = V^{\pi_g}$:
		\begin{equation} \label{eq28}
			\begin{aligned}
				V^* - V^{\pi_g} &= \mathcal{T}^{\pi^*}V^* - \mathcal{T}^{\pi^*}V + \underbrace{\mathcal{T}^{\pi^*}V - \mathcal{T}^{\pi_g}V}_{\preceq \bm{0}} + \mathcal{T}^{\pi_g}V - \mathcal{T}^{\pi_g}V^{\pi_g} \\
				&\preceq \mathcal{T}^{\pi^*}V^* - \mathcal{T}^{\pi^*}V + \mathcal{T}^{\pi_g}V - \mathcal{T}^{\pi_g}V^{\pi_g} \\
				&= \gamma P^{\pi^*}(V^* - V^{\pi_g} + V^{\pi_g} - V) + \gamma P^{\pi_g}(V - V^{\pi_g}). \\
				&\Rightarrow (I - \gamma P^{\pi^*})(V^* - V^{\pi_g}) \preceq \gamma (P^{\pi^*} - P^{\pi_g})(V^{\pi_g} - V). \\
				&\Rightarrow V^* - V^{\pi_g} \preceq (I - \gamma P^{\pi^*})^{-1} \gamma (P^{\pi^*} - P^{\pi_g})(V^{\pi_g} - V). \\
			\end{aligned} 
		\end{equation} 
		
		\begin{equation} \label{eq29}
			\begin{aligned}
				(I - \gamma P^{\pi_g})(V^{\pi_g} - V) 
				&= V^{\pi_g} - V - \gamma P^{\pi_g}V^{\pi_g} + \gamma P^{\pi_g}V  \\
				&= r + \gamma P^{\pi_g}V - \Big( r + \gamma P^{\pi_g}V^{\pi_g} \Big) + V^{\pi_g} - V \\
				&= \mathcal{T}^{\pi_g}V - \mathcal{T}^{\pi_g}V^{\pi_g} + V^{\pi_g} - V \\
				&= \mathcal{T}V - V. \\
			\end{aligned} 
		\end{equation}
		
		Combine Inequation~\ref{eq28} and Equation~\ref{eq29}:
		\begin{equation}
			\begin{aligned}
				V^* - V^{\pi_g} 
				&\preceq (I - \gamma P^{\pi^*})^{-1} \gamma (P^{\pi^*} - P^{\pi_g})(V^{\pi_g} - V) \\
				&= (I - \gamma P^{\pi^*})^{-1} \gamma (P^{\pi^*} - P^{\pi_g})(I - \gamma P^{\pi_g})^{-1}(\mathcal{T}V - V) \\
				&= (I - \gamma P^{\pi^*})^{-1}\Big[ (I - \gamma P^{\pi_g}) - (I - \gamma P^{\pi^*}) \Big](I - \gamma P^{\pi_g})^{-1}(\mathcal{T}V - V) \\
				&= \Big[ (I - \gamma P^{\pi^*})^{-1} - (I - \gamma P^{\pi_g})^{-1} \Big](\mathcal{T}V - V) \\
				&\preceq \Big[ (I - \gamma P^{\pi^*})^{-1} + (I - \gamma P^{\pi_g
				})^{-1} \Big]|\mathcal{T}V - V|. \\
			\end{aligned}
		\end{equation}
		
		Proof of Lemma~\ref{lema:value_lip_lema1} is completed.
	\end{proof}
	
	\begin{lemma}  \label{lema:value_lip_lema2}
		For a given policy $\pi$:
		\begin{equation}
			\begin{aligned}
				(I - \gamma P^{\pi})^{-1} = \sum_{k=0}^{\infty} \Big( \gamma P^{\pi} \Big)^k.
			\end{aligned}
		\end{equation}
	\end{lemma} 
	
	\begin{proof}[Proof of Lemma~\ref{lema:value_lip_lema2}]
		\begin{equation}
			\begin{aligned} \label{eq23}
				(I - \gamma P^{\pi})\sum_{k=0}^{\infty} \Big( \gamma P^{\pi} \Big)^k 
				&= \sum_{k=0}^{\infty} \bigg[ \Big( \gamma P^{\pi} \Big)^k - \Big( \gamma P^{\pi} \Big)^{k+1} \bigg] \\
				&= I - \lim_{k\rightarrow\infty} \Big( \gamma P^{\pi} \Big)^{k+1} = I. \\
			\end{aligned} 
		\end{equation}
		Proof of Lemma~\ref{lema:value_lip_lema2} is completed.
	\end{proof} 
	
	Now we turn to the proof of Theorem~\ref{theo:value_lip_convergence}, we rewrite Lemma~\ref{lema:value_lip_lema1} as follows:
	\begin{equation} \label{eq:defi_A}
		\begin{aligned} 
			V^* - V^{\pi_g} &\preceq \frac{2}{1 - \gamma} A |\mathcal{T}V - V| \\
			A &= \frac{1 - \gamma}{2} \Big[ (I-\gamma P^{\pi^*})^{-1} + (I-\gamma P^{\pi_g})^{-1} \Big]. \\
		\end{aligned}
	\end{equation}
	
	Under the $q$-norm weighted by $v$:
	\begin{equation} \label{eq:jensen}
		\begin{aligned} 
			\|V^* - V^{\pi_g}\|_{q, v}^q &\le [\frac{2}{1 - \gamma}]^q \sum_{s\in \mathcal{S}} v(s) \big[ A(s) |\mathcal{T}V(s) - V(s)| \big]^q \\ 
			&\le [\frac{2}{1 - \gamma}]^q \sum_{s\in \mathcal{S}} v(s) A(s) |\mathcal{T}V(s) - V(s)|^q.
		\end{aligned}
	\end{equation}
	
	Equation~\ref{eq:jensen} stems from Jensen's inequality, exploiting the convexity of the function $f(x) = x^q$ for $q \geq 1$. Furthermore, when combined with Lemma~\ref{lema:value_lip_lema2} and Equation~\ref{eq:defi_A}, each row of matrix $A$ attains a sum of $\frac{1 - \gamma}{2}\left[\frac{1}{1 - \gamma} + \frac{1}{1 - \gamma}\right] = 1$, meeting the prerequisites of Jensen's inequality.
	
	Next we focus on the $v(s)A(s)$ in Equation~\ref{eq:jensen}:
	\begin{equation}\label{eq:vA}
		\begin{aligned}
			vA
			&= \frac{1 - \gamma}{2} v \Big[ (I-\gamma P^{\pi^*})^{-1} + (I-\gamma P^{\pi_g})^{-1} \Big] \\
			&= \frac{1 - \gamma}{2} \sum_{m=0}^{\infty} \gamma^m \Big(  vP^{\pi^*} + vP^{\pi_g} \Big) \ \ \mbox{\# invoking Lemma~\ref{lema:value_lip_lema2}} \\
			&\preceq \frac{1 - \gamma}{2} \Big( \sum_{m\ge 0} \gamma^m c(m)u + \sum_{m\ge 0} \gamma^m c(m)u \Big) \ \ \mbox{\# under Definition~\ref{defi:c}} \\
			&= C_1(v, u)u. \\
		\end{aligned}
	\end{equation}
	
	Now we return to Equation~\ref{eq:jensen}, combined with Equation~\ref{eq:vA}:
	\begin{equation} \label{eq:42}
		\begin{aligned}
			\|V^* - V^{\pi_g}\|_{q, v}^q 
			&\le [\frac{2}{1 - \gamma}]^q C_1(v, u) \sum_{s\in \mathcal{S}} u(s) |\mathcal{T}V(s) - V(s)|^q. \\ 
		\end{aligned}
	\end{equation}
	
	In the description of Theorem~\ref{theo:value_lip_convergence}, Because $\pi_g$ is denoted as the greedy policy of a given value function $V$, $V$ is the value function of some policy $\pi_0$. Therefore under assumption~\ref{assu:deterministic}: $\mathcal{T}V(s) - V(s) = r(s) + \gamma V(s_1) - r(s) - \gamma V(s_2) = \gamma (V(s_1) - V(s_2))$, $s_1 = \mathcal{P}(s, \pi_g(s)), s_2 = \mathcal{P}(s, \pi_0(s))$. 
	
	Continuing Equation~\ref{eq:42}:
	\begin{equation} 
		\begin{aligned}
			\|V^* - V^{\pi_g}\|_{q, v}^q 
			&\le [\frac{2}{1 - \gamma}]^q C_1(v, u) \sum_{s\in \mathcal{S}} u(s) |\mathcal{T}V(s) - V(s)|^q \\
			&=  [\frac{2}{1 - \gamma}]^q C_1(v, u) \sum_{s\in \mathcal{S}} u(s) |\gamma (V(s_1) - V(s_2)|^q \ \ \mbox{\# ($s_1 = \mathcal{P}(s, \pi_g(s)), s_2 = \mathcal{P}(s, \pi_0(s))$)} \\
			&\le [\frac{2}{1 - \gamma}]^q C_1(v, u) \sum_{s\in \mathcal{S}} u(s) \bigg[\gamma\Lp(V; \mathcal{S})\|s_1 - s_2\| \bigg]^q \ \ \mbox{\# invoking Corollary~\ref{coro:value_lip}} \\
			&\le \bigg[ \frac{2 \gamma}{1 - \gamma} \frac{\big(1 - (\gamma K_2)^H\big)K_1 D_1}{1 - \gamma K_2} \bigg]^q C_1(v, u). \\
		\end{aligned}
	\end{equation}
	Proof of Theorem~\ref{theo:value_lip_convergence} is completed.
\end{proof}

A direct corollary from Theorem~\ref{theo:value_lip_convergence} is:
\begin{corollary}\label{coro:value_lip_convergence} 
	Under the setting of Theorem~\ref{theo:value_lip_convergence}:
	\begin{equation}
		\begin{aligned} 
			& ||V^* - V^{\pi_g}||_\infty \le \frac{2\gamma K_1 D_1 \big(1 - (\gamma K_2)^H\big)}{(1 - \gamma)(1 - \gamma K_2)}. 
		\end{aligned}
	\end{equation}
\end{corollary}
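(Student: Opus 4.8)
The plan is to derive Corollary~\ref{coro:value_lip_convergence} as a direct limiting case of Theorem~\ref{theo:value_lip_convergence}. The key observation is that the $\infty$-norm $\|V^* - V^{\pi_g}\|_\infty = \sup_{s\in\mathcal{S}} |V^*(s) - V^{\pi_g}(s)|$ can be recovered from the weighted $q$-norm $\|\cdot\|_{q,v}$ by sending $q\to\infty$ and choosing the reference distributions appropriately. First I would recall the standard fact that for a fixed probability distribution $v$ with full support, $\lim_{q\to\infty}\|V\|_{q,v} = \|V\|_\infty$ (the weight $v(s)$ contributes a factor $v(s)^{1/q}\to 1$, so the limit of $q$-norms is the essential supremum, which under full support is the genuine supremum over $\mathcal{S}$). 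Thus one can take the bound of Theorem~\ref{theo:value_lip_convergence} and pass to the limit.

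Next I would examine the right-hand side of Theorem~\ref{theo:value_lip_convergence}, namely $\frac{2\gamma K_1 D_1(1-(\gamma K_2)^H)}{(1-\gamma)(1-\gamma K_2)}[C_1(v,u)]^{1/q}$. To make the $q\to\infty$ limit collapse to the claimed constant, I would observe that $[C_1(v,u)]^{1/q}\to 1$ provided $C_1(v,u)$ is finite and positive, since for any finite constant $c>0$ we have $c^{1/q}\to 1$. A clean way to arrange $C_1(v,u)<\infty$ is to choose $u=v$: then $c(0)=1$ and, since $vP^{\pi_1}\cdots P^{\pi_m}$ is again a probability distribution, taking $v$ to be (say) the uniform distribution on a finite $\mathcal{S}$, or more generally any distribution under which the concentrability ratios are bounded, gives $c(m)<\infty$, hence $C_1(v,v) = (1-\gamma)\sum_{m\ge 0}\gamma^m c(m) < \infty$ (for finite $\mathcal{S}$, $c(m)\le 1/\min_s v(s)$, so the geometric series converges). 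With this choice, letting $q\to\infty$ in the bound of Theorem~\ref{theo:value_lip_convergence} yields exactly $\|V^*-V^{\pi_g}\|_\infty \le \frac{2\gamma K_1 D_1(1-(\gamma K_2)^H)}{(1-\gamma)(1-\gamma K_2)}$.

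Alternatively — and perhaps more robustly, avoiding any finiteness subtleties about $C_1$ — I would revisit the proof of Theorem~\ref{theo:value_lip_convergence} and note that it actually establishes the pointwise (vector) inequality $V^* - V^{\pi_g} \preceq \frac{2}{1-\gamma} A\,|\mathcal{T}V - V|$ with $A$ a nonnegative matrix whose rows sum to $1$. Since $|\mathcal{T}V(s)-V(s)| = \gamma|V(s_1)-V(s_2)| \le \gamma\,\Lp(V;\mathcal{S})\,\|s_1-s_2\| \le \gamma\,\Lp(V;\mathcal{S})\,D_1 \le \gamma D_1 \frac{(1-(\gamma K_2)^H)K_1}{1-\gamma K_2}$ uniformly in $s$ (invoking Corollary~\ref{coro:value_lip}), the row-stochasticity of $A$ gives $A|\mathcal{T}V - V| \preceq \gamma D_1\frac{(1-(\gamma K_2)^H)K_1}{1-\gamma K_2}\,\mathbf{1}$ pointwise, so $V^* - V^{\pi_g} \preceq \frac{2\gamma K_1 D_1(1-(\gamma K_2)^H)}{(1-\gamma)(1-\gamma K_2)}\,\mathbf{1}$, and since $V^*\succeq V^{\pi_g}$ the left side is already nonnegative, yielding the $\infty$-norm bound directly. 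The main obstacle, such as it is, is purely bookkeeping: making sure the $q\to\infty$ argument is legitimate (equivalently, that the concentrability term does not blow up), which the direct row-stochastic argument sidesteps entirely; I would present the direct argument as the cleanest route and mention the limiting interpretation as a remark.
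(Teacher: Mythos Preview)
Your proposal is correct. The paper gives no proof at all for this corollary, merely stating that it is ``a direct corollary from Theorem~\ref{theo:value_lip_convergence}''; your $q\to\infty$ limiting argument is the natural reading of that remark, and your alternative row-stochasticity argument (bounding $|\mathcal{T}V - V|$ pointwise and using that $A$ has unit row sums) is a cleaner self-contained route that avoids any concentrability subtleties.
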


\section{All of Our Prompts}

\label{appendix:prompts}

There are three prompt templates in total. 

\textbf{The initial prompt for the first iteration.} \quad This prompt is designed to use at the commencement of the first iteration, which is the letter `$p$' in Algorithm~\ref{alg:ours}. LLM is required to output the state representation and intrinsic reward functions in python code format. Notably, the `task\_description' and `detail\_content of each dimensions' in the prompt are derived from the official document of Mujoco\footnote{https://www.gymlibrary.dev/environments/mujoco/} and Gym-Robotics\footnote{https://robotics.farama.org/}.

\textbf{Prompt for Chain-of-thought Feedback Analysis.} \quad This prompt is formulated to facilitate the examination by LLM of the outcomes from all training experiments during each iteration in a chain-of-thought process\cite{wei2022chain}. This prompt corresponds to the variable $p_{feedback}$ in Algorithm~\ref{alg:ours}. LLM is expected to provide suggestions about how to enhance the performance of the state representation function codes. It is noteworthy that the `iteration\_results' referred to in the prompt encompasses both policy performance and correlation coefficients, as elaborated in Section~\ref{sec:state_feature_expansion}.

\textbf{Subsequent Prompt for Later Iterations.} \quad This prompt is similar to `The initial prompt for the first iteration'. However, the difference is that it contains the information of history iterations, as well as LLM's suggestions about how to enhance the performance of the state representation and intrinsic reward functions. 

Here are the prompt templates:

\begin{mybox}[frametitle=Initial Prompt for the First Iteration]
	Revise the state representation for a reinforcement learning agent. \\
	=========================================================\\
	The agent’s task description is:\\
	\textcolor{red}{\{task\_description\}}\\
	=========================================================\\
	\\
	The current state is represented by a \textcolor{red}{\{total\_dim\}}-dimensional Python NumPy array, denoted as `s`.\\
	\\
	Details of each dimension in the state `s` are as follows:\\
	\textcolor{red}{\{detail\_content\}}\\
	You should design a task-related state representation based on the source \textcolor{red}{\{total\_dim\}} dim to better for reinforcement training, using the detailed information mentioned above to do some caculations, and feel free to do complex caculations, and then concat them to the source state. \\
	\\
	Besides, we want you to design an intrinsic reward function based on the revise\_state python function.\\
	\\
	That is to say, we will:\\
	1. use your revise\_state python function to get an updated state: updated\_s = revise\_state(s)\\
	2. use your intrinsic reward function to get an intrinsic reward for the task: r = intrinsic\_reward(updated\_s)\\
	3. to better design the intrinsic\_reward, we recommond you use some source dim in the updated\_s, which is between updated\_s[0] and updated\_s[\textcolor{red}{\{total\_dim - 1\}}] \\
	4. however, you must use the extra dim in your given revise\_state python function, which is between updated\_s[\textcolor{red}{\{total\_dim\}}] and the end of updated\_s\\

	Your task is to create two Python functions, named `revise\_state`, which takes the current state `s` as input and returns an updated state representation, and named `intrinsic\_reward`, which takes the updated state `updated\_s` as input and returns an intrinsic reward. The functions should be executable and ready for integration into a reinforcement learning environment.\\
	\\
	The goal is to better for reinforcement training. Lets think step by step. Below is an illustrative example of the expected output:\\
	\\
	```python\\
	import numpy as np\\
	def revise\_state(s):\\
	\# Your state revision implementation goes here\\
	return updated\_s\\
	def intrinsic\_reward(updated\_s):\\
	\# Your intrinsic reward code implementation goes here\\
	return intrinsic\_reward\\
	```
\end{mybox}

\begin{mybox}[frametitle=Prompt for Chain-of-thought Feedback Analysis]
	We have successfully trained Reinforcement Learning policy using \textcolor{red}{\{args.sample\_count\}} different state revision codes and intrinsic reward function codes sampled by you, and each pair of code is associated with the training of a policy relatively.\\
	\\
	Throughout every state revision code's training process, we monitored:\\
	1. The final policy performance(accumulated reward).\\
	2. Most importantly, every state revise dim's Lipschitz constant with the reward. That is to say, you can see which state revise dim is more related to the reward and which dim can contribute to enhancing the continuity of the reward function mapping.\\
	\\
	Here are the results:\\
	\textcolor{red}{\{iteration\_results(performance and Lipschitz constants)\}}\\
	\\
	You should analyze the results mentioned above and give suggestions about how to imporve the performace of the "state revision code".\\
	\\
	Here are some tips for how to analyze the results:\\
	(a) if you find a state revision code's performance is very low, then you should analyze to figure out why it fail\\
	(b) if you find some dims' Lipschitz constant very large, you should analyze to figure out what makes it fail\\
	(c) you should also analyze how to imporve the performace of the "state revision code" and "intrinsic reward code" later\\
	\\
	Lets think step by step. Your solution should aim to improve the overall performance of the RL policy.
\end{mybox}

\begin{mybox}[frametitle=Subsequent Prompt for Later Iterations]
	Revise the state representation for a reinforcement learning agent. \\
	=========================================================\\
	The agent’s task description is:\\
	\textcolor{red}{\{task\_description\}}\\
	=========================================================\\
	\\
	The current state is represented by a \textcolor{red}{\{total\_dim\}}-dimensional Python NumPy array, denoted as `s`.\\
	\\
	Details of each dimension in the state `s` are as follows:\\
	\textcolor{red}{\{detail\_content\}}\\
	You should design a task-related state representation based on the source \textcolor{red}{\{total\_dim\}} dim to better for reinforcement training, using the detailed information mentioned above to do some caculations, and feel free to do complex caculations, and then concat them to the source state. \\
	\\
	For this problem, we have some history experience for you, here are some state revision codes we have tried in the former iterations:\\
	\textcolor{red}{\{former\_histoy\}}\\
	\\
	Based on the former suggestions. We are seeking an improved state revision code and an improved intrinsic reward code that can enhance the model's performance on the task. The state revised code should incorporate calculations, and the results should be concatenated to the original state.\\
	\\
	Besides, We are seeking an improved intrinsic reward code.\\
	\\
	That is to say, we will:\\
	1. use your revise\_state python function to get an updated state: updated\_s = revise\_state(s)\\
	2. use your intrinsic reward function to get an intrinsic reward for the task: r = intrinsic\_reward(updated\_s)\\
	3. to better design the intrinsic\_reward, we recommond you use some source dim in the updated\_s, which is between updated\_s[0] and updated\_s[\textcolor{red}{\{total\_dim - 1\}}] \\
	4. however, you must use the extra dim in your given revise\_state python function, which is between updated\_s[\textcolor{red}{\{total\_dim\}}] and the end of updated\_s\\

	Your task is to create two Python functions, named `revise\_state`, which takes the current state `s` as input and returns an updated state representation, and named `intrinsic\_reward`, which takes the updated state `updated\_s` as input and returns an intrinsic reward. The functions should be executable and ready for integration into a reinforcement learning environment.\\
	\\
	The goal is to better for reinforcement training. Lets think step by step. Below is an illustrative example of the expected output:\\
	\\
	```python\\
	import numpy as np\\
	def revise\_state(s):\\
	\# Your state revision implementation goes here\\
	return updated\_s\\
	def intrinsic\_reward(updated\_s):\\
	\# Your intrinsic reward code implementation goes here\\
	return intrinsic\_reward\\
	```	
\end{mybox}

\section{Additional Environment Information}\label{appendix:env_info}

\subsection{Mujoco Environment Tasks}

\textbf{HalfCheetah} \quad This environment is based on the work by P. Wawrzyński in \cite{wawrzynski2009cat}. The HalfCheetah is a 2-dimensional robot consisting of 9 body parts and 8 joints connecting them (including two paws). The goal is to apply a torque on the joints to make the cheetah run forward (right) as fast as possible, with a positive reward allocated based on the distance moved forward and a negative reward allocated for moving backward. The torso and head of the cheetah are fixed, and the torque can only be applied on the other 6 joints over the front and back thighs (connecting to the torso), shins (connecting to the thighs) and feet (connecting to the shins).

\textbf{Hopper} \quad This environment is based on the work done by Erez, Tassa, and Todorov in \cite{erez2011infinite}. The environment aims to increase the number of independent state and control variables as compared to the classic control environments. The hopper is a two-dimensional one-legged figure that consist of four main body parts - the torso at the top, the thigh in the middle, the leg in the bottom, and a single foot on which the entire body rests. The goal is to make hops that move in the forward (right) direction by applying torques on the three hinges connecting the four body parts.

\textbf{Walker} \quad This environment builds on the \textbf{Hopper} environment by adding another set of legs making it possible for the robot to walk forward instead of hop. Like other Mujoco environments, this environment aims to increase the number of independent state and control variables as compared to the classic control environments. The walker is a two-dimensional two-legged figure that consist of seven main body parts - a single torso at the top (with the two legs splitting after the torso), two thighs in the middle below the torso, two legs in the bottom below the thighs, and two feet attached to the legs on which the entire body rests. The goal is to walk in the in the forward (right) direction by applying torques on the six hinges connecting the seven body parts.

\textbf{Ant} \quad This environment is based on the environment introduced by Schulman, Moritz, Levine, Jordan and Abbeel in \cite{schulman2015high}. The ant is a 3D robot consisting of one torso (free rotational body) with four legs attached to it with each leg having two body parts. The goal is to coordinate the four legs to move in the forward (right) direction by applying torques on the eight hinges connecting the two body parts of each leg and the torso (nine body parts and eight hinges). 

\textbf{Swimmer} \quad This environment corresponds to the Swimmer environment described in Rémi Coulom’s PhD thesis \cite{coulom2002reinforcement}. The environment aims to increase the number of independent state and control variables as compared to the classic control environments. The swimmers consist of three or more segments (’links’) and one less articulation joints (’rotors’) - one rotor joint connecting exactly two links to form a linear chain. The swimmer is suspended in a two dimensional pool and always starts in the same position (subject to some deviation drawn from an uniform distribution), and the goal is to move as fast as possible towards the right by applying torque on the rotors and using the fluids friction.

\subsection{Gym-Robotics Environment Tasks}

\textbf{AntMaze} \quad This environment was refactored from the D4RL repository, introduced by Justin Fu, Aviral Kumar, Ofir Nachum, George Tucker, and Sergey Levine in \cite{fu2020d4rl}. The task in the environment is for an ant-agent to reach a target goal in a closed maze. The ant is a 3D robot consisting of one torso (free rotational body) with four legs attached to it with each leg having two body parts. The goal is to reach a target goal in a closed maze by applying torques on the eight hinges connecting the two body parts of each leg and the torso (nine body parts and eight hinges).

\textbf{Fetch} \quad This environment was introduced in \cite{plappert2018multi}. The robot is a 7-DoF Fetch Mobile Manipulator with a two-fingered parallel gripper. The robot is controlled by small displacements of the gripper in Cartesian coordinates and the inverse kinematics are computed internally by the MuJoCo framework. The gripper is locked in a closed configuration in order to perform the push task. The task is also continuing which means that the robot has to maintain the block in the target position for an indefinite period of time. Notably, in ``FetchPush" or ``FetchSlide" when the absolute value of the reward is smaller than 0.05, namely $|r| < 0.05$, we consider the task to be terminated. This is because the returned dense reward is the negative Euclidean distance between the achieved goal position and the desired goal.

\textbf{Adroit Hand} \quad This environment was introduced in \cite{rajeswaran2017learning}. The environment is based on the Adroit manipulation platform, a 28 degree of freedom system which consists of a 24 degrees of freedom ShadowHand and a 4 degree of freedom arm. Notably, in ``AdroitDoor" when the absolute value of the reward is greater than 20, namely $|r| > 20$, we consider the task to be terminated. This is because a total positive reward of 20 is added if the door hinge is opened more than 1.35 radians. In ``AdroitHammer" when the absolute value of the reward is greater than 25, namely $|r| > 25$, we consider the task to be terminated. This is because a total positive reward of 25 is added if the euclidean distance between both body frames is less than 0.02 meters.

For more information about the tasks, please refer to the official document and the links are presented in Appendix~\ref{appendix:prompts}.

\subsection{Toy Example Settings}\label{appendix:toy_example}

This subsection introduces the settings of the toy example in Figure~\ref{fig:demo}.

\textbf{Task Description} \quad We employ the tailored interface of the \texttt{PointMaze} environment within Gym-Robotics\cite{de2023gymnasium}. As illustrated in Figure~\ref{fig:appendix_demo}, the maze has dimensions of $10\times10$, and the agent commences its navigation from the bottom-left corner, aiming to reach the top-right corner.

\begin{figure}[h]
	\centering
	\includegraphics[width=.2\textwidth]{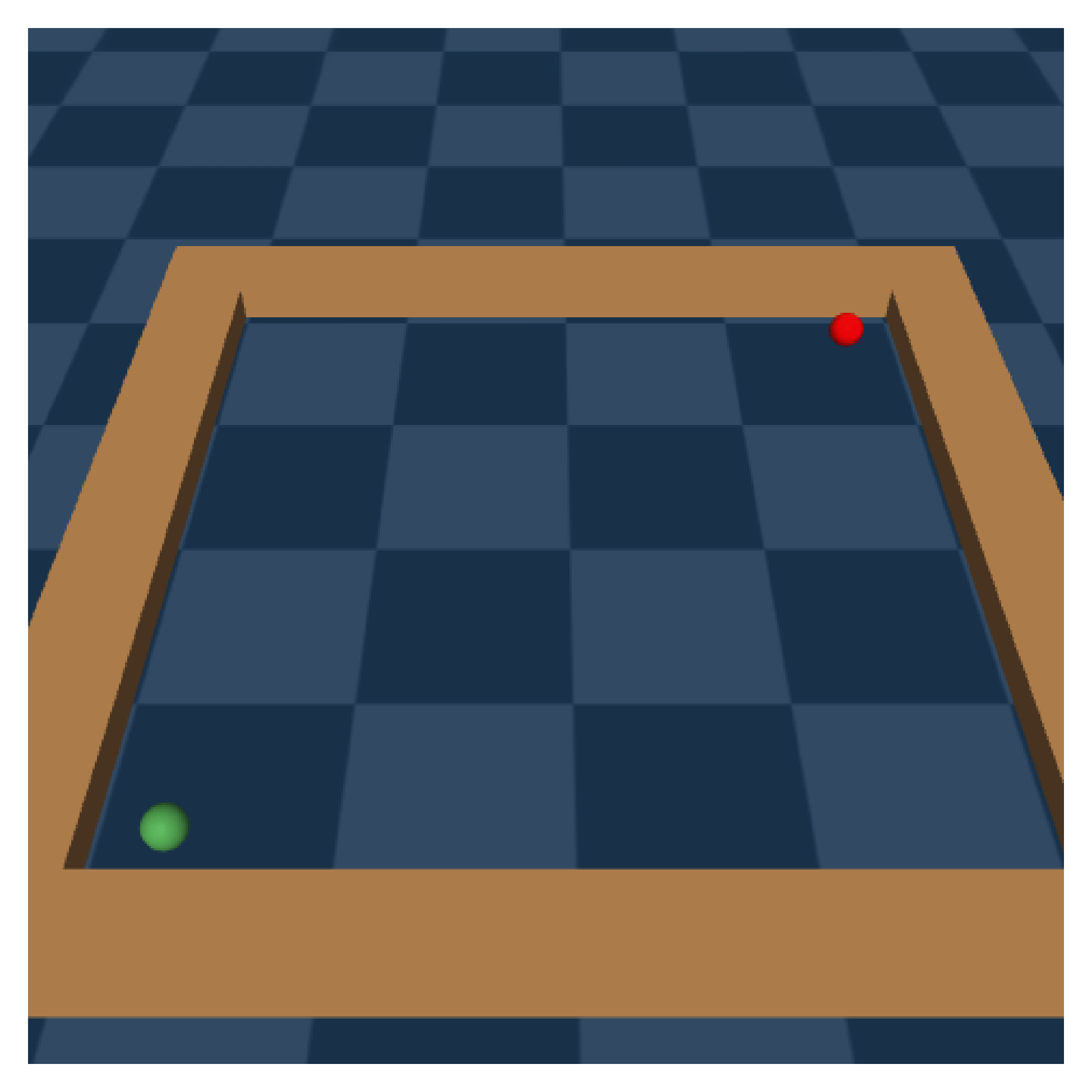}
	\caption{Demonstration of the customized maze in `Human' mode render.}
	\label{fig:appendix_demo}
\end{figure}

\textbf{Observation Space} \quad The source observation only contains an array of shape (4, ). The entire observation space is continuous, where obs[0] and obs[1] denotes the x, y coordinates of the agent's current position, while obs[2] and obs[3] represent the x, y coordinates of the target location.

\textbf{Action Space} \quad The source action only contains an array of shape (2, ). The entire action space is continuous, where action[0] and action[1] denote the linear force in the x, y direction to the agent.

\textbf{Rewards} \quad We use dense rewards. The returned reward is the negative Euclidean distance between the current agent's location and the target location. 

\textbf{Training Parameters} We use TD3~\cite{fujimoto2018addressing} as the RL algorithm. The values of hyperparameters for TD3\cite{fujimoto2018addressing} are derived from their original implementation\footnote{https://github.com/sfujim/TD3}. In Figure~\ref{fig:demo-a} `\textbf{1} Iteration', the `Episode Return' is normalized to 0-100.

\section{Additional Experiments} \label{appen:futher_ablations}

\subsection{Could LESR work if it uses the LLM-coded reward function as the only reward signal?}

We have conducted several experiments in Mujoco tasks. \textbf{Directly Intrinsic} means directly using the best state representation and intrinsic reward codes $\mathcal{F}_{best}, \mathcal{G}_{best}$ to train without extrinsic reward. \textbf{LESR w/o ER} means throughout the entire iterative process of LESR, extrinsic rewards are entirely removed. \textbf{LESR w/o LC} means throughout the entire iterative process of LESR, Lipschitz constant is entirely removed. Results under five random seeds are demonstrated in the following Table~\ref{table:only_intrinsic}. 

\begin{table}[h]
	\centering
	\caption{Experiments of using the LLM-coded reward function as the only reward signal, while dropping the external reward information.}
	\label{table:only_intrinsic}
	\resizebox{.8\columnwidth}{!}{
		\begin{tabular}{c|c|c|c|c|c}
			\toprule
			\diagbox{\textbf{Environments}}{\textbf{Algorithm}} & \textbf{TD3} & \textbf{Directly Intrinsic} & \textbf{LESR w/o ER} & \textbf{LESR w/o LC} & \textbf{LESR(Ours)} \\
			\midrule
			HalfCheetah & 9680.2±1555.8 & 8919.9±1761.9 & 10252.8±277.1 & 10442.9±304.4 & \textbf{10614.2±510.8} \\
            Hopper & 3193.9±507.8 & 3358.4±76.6 &	\textbf{3408.4±144.0} & 3362.3±109.4 & \textbf{3424.8±143.7}\\
            Walker2d & 3952.1±445.7 & 1924.3±969.3 & 3865.8±142.5 & \textbf{4356.2±335.3} & \textbf{4433.0±435.3} \\
            Ant & 3532.6±1265.3 & 3518.0±147.6 & \textbf{4779.1±30.4} & 3242.3±459.9 & 4343.4±1171.4 \\
            Swimmer-v3 & 84.9±34.0 & 25.2±4.1 & 51.9±0.1 & 116.9±6.3 & \textbf{164.2±7.6} \\
			\bottomrule
	\end{tabular} }
\end{table}

\textbf{Directly Intrinsic}: When we directly use $\mathcal{F_{best}}, \mathcal{G_{best}}$ to train without extrinsic reward. It's observed that the performance of our method is subject to perturbations but remains relatively stable, achieving comparable results to TD3 in environments such as HalfCheetah, Hopper, and Ant.

\textbf{LESR w/o ER}: Particularly, when we iterate without extrinsic rewards throughout the process, LESR demonstrates superior final performance over TD3 in most tasks.

\textbf{LESR w/o LC}: In experiments where the Lipschitz constant is removed, there is a performance decrease compared to LESR, yet still outperforming TD3, further indicating the effectiveness of utilizing the Lipschitz constant for feedback in our approach.

\subsection{Two New Task Decriptions}

To further validate the generalization capability of LESR, we conduct experiments on two entirely new task descriptions: (1) requiring the walker agent to learn to jump in place, and (2) requiring the walker agent to learn to stand with its legs apart. We abstain from any iterations or feedback, training solely on the $\mathcal{F}, \mathcal{G}$ produced in the first round of LESR, and only for 300k time steps. We generate the final GIFs in the github repository of LESR\footnote{https://github.com/thu-rllab/LESR}, and from the GIFs on the webpage, it is observed that the walker can perform the jumping and standing with legs apart actions as per the task descriptions, which further highlights the generalization significance of LESR. 

\subsection{Different Choices of Lipschitz constant}

As mentioned in Section~\ref{sec:lip_feedback}, apart from estimating the Lipschitz constant computed independently for each state dimension concerning the extrinsic reward, we can also employ the discouted return~(\textbf{LESR(DR)}) or spectral norm~(\textbf{LESR(SN)}) instead of the extrinsic reward. We have conducted experimental evaluations on these various feedback signals in Mujoco tasks in Table~\ref{table:different_choices}. Note: In this setting the sample count $K$ is set to 3. 

\begin{table}[h]
	\centering
	\caption{Various choices of Lipschitz constant estimation as feedback signals to LLM. \textbf{LESR(DR)}: LESR with Discounted Return. \textbf{LESR(SN)}: LESR with Spectral Norm.}
	\label{table:different_choices}
	\resizebox{.7\columnwidth}{!}{
		\begin{tabular}{c|c|c|c|c}
			\toprule
			\diagbox{\textbf{Environments}}{\textbf{Algorithm}} & \textbf{TD3} & \textbf{LESR(DR)} & \textbf{LESR(SN)} & \textbf{LESR(Ours)} \\
			\midrule
			Ant & 3532.6±1265.3 & 4451.0±1532.7 & 3649.2±1709.3 & \textbf{5156.4±267.3} \\
            HalfCheetah & 9680.2±1555.8 & 8926.1±1487.2 & \textbf{10830.8±340.6} & 9740.1±761.0 \\
            Hopper & 3193.9±507.8 & \textbf{3405.6±482.3} & 3179.4±592.5 & 3370.9±135.8 \\
            Swimmer & 84.9±34.0 & 60.1±10.5 & 49.7±1.9 & \textbf{142.0±7.6} \\
			\bottomrule
	\end{tabular} }
\end{table}

It's illustrated that both \textbf{LESR(DR)} and \textbf{LESR(SN)} are feasible for estimating the Lipschitz constant. However, LESR demonstrates comparatively more stable performance with lower variance in the context of dense reward environments. Nonetheless, this does not discount the utility of employing discounted return and spectral norm estimation methods. Specifically, in scenarios involving discontinuous rewards, the necessity of employing LESR(DR) and LESR(SN) becomes apparent.

\subsection{More Training Details of LESR}\label{appendix:training_curve}
We have presented the performance data for a limited scope of 300k environment interaction training steps in Table~\ref{tab:300k_run}. The results demonstrate that LESR exhibits superior performance within significantly fewer training steps, excelling comprehensively across all tasks in comparison to the RL baseline algorithm TD3, thereby manifesting heightened sample efficiency.

We have also presented the final evaluating curves in Figure~\ref{fig:evaluate_all} using all tasks' best state representation and intrinsic reward functions $\mathcal{F}_{best}, \mathcal{G}_{best}$. This further substantiates that our approach outperforms the baseline in terms of both sample efficiency and ultimate performances.
 
In Figure~\ref{fig:iteration} we have presented performance improments compared with the baseline using every iteration's best state representation and intrinsic reward functions $\mathcal{F}_{best}, \mathcal{G}_{best}$. It is demonstrated that the feedback part plays a critical role in our method. The final performance demonstrated a gradual amelioration across successive iterations.

\subsection{Semantic Analysis}\label{appendix:semantic}
Here are the state representation functions for the \textit{Swimmer} task across four random seeds.
The state representations in this task can be categorized into five groups, whose names are derived from the comments provided by the LLM: \textit{cosine or sine of the angles~(c1)}, \textit{relative angles between adjacent links~(c2)}, \textit{kinetic energy~(c3)}, \textit{distance moved~(c4)}, and \textit{sum of torques applied~(c5)}.

\begin{mybox}[frametitle=State representation functions for the \textit{Swimmer} task]
\textbf{$seed_1$}:
\begin{lstlisting}[language={Python}]
def revise_state(s):
    x_velocity_squared = s[3]**2
    total_angular_velocity = np.sum(s[5:]**2)
    cos_angles = np.cos(s[:3])
    sin_angles = np.sin(s[:3])
    relative_angles = np.diff(s[1:3])
    updated_s = np.concatenate((s, [x_velocity_squared, total_angular_velocity], cos_angles, sin_angles, relative_angles))
    return updated_s
\end{lstlisting}

\textbf{$seed_2$}:
\begin{lstlisting}[language={Python}]
def revise_state(s):
    if len(s) > 3:
        relative_angles = [s[i+1] - s[i] for i in range(1, len(s)-3)]
    else:
        relative_angles = []
    kinetic_energy = 0.5 * (s[3]**2 + s[4]**2) + 0.5 * sum(s[5:]**2)
    distance_moved = s[3]
    sum_torques = sum(abs(s[5:]))
    updated_s = np.concatenate((s, relative_angles, [kinetic_energy, distance_moved, sum_torques]))
    return updated_s
\end{lstlisting}

\textbf{$seed_3$}:
\begin{lstlisting}[language={Python}]
def revise_state(s):
    relative_angle_1_2 = s[1] - s[2] 
    total_kinetic_energy = 0.5 * (s[5]**2 + s[6]**2 + s[7]**2) 
    mass = 1.0
    length = 1.0
    gravity = 9.81
    potential_energy = mass * gravity * length * np.sin(s[0])
    dt = 1.0 
    distance_x = s[3] * dt
    updated_s = np.concatenate((s, [relative_angle_1_2, total_kinetic_energy, potential_energy, distance_x]))
    return updated_s
\end{lstlisting}

\textbf{$seed_4$}:
\begin{lstlisting}[language={Python}]
def revise_state(s):
    cos_angles = np.cos(s[:3])
    sin_angles = np.sin(s[:3])
    relative_angles = np.diff(s[:3])
    kinetic_energy = 0.5 * (s[3]**2 + s[4]**2) + 0.5 * np.sum(s[5:]**2)
    distance_moved = s[3]
    updated_s = np.concatenate((s, cos_angles, sin_angles, relative_angles, [kinetic_energy, distance_moved]))
    return updated_s
\end{lstlisting}

\end{mybox}

\begin{figure}[htbp]
	\centering
	\includegraphics[width=\textwidth]{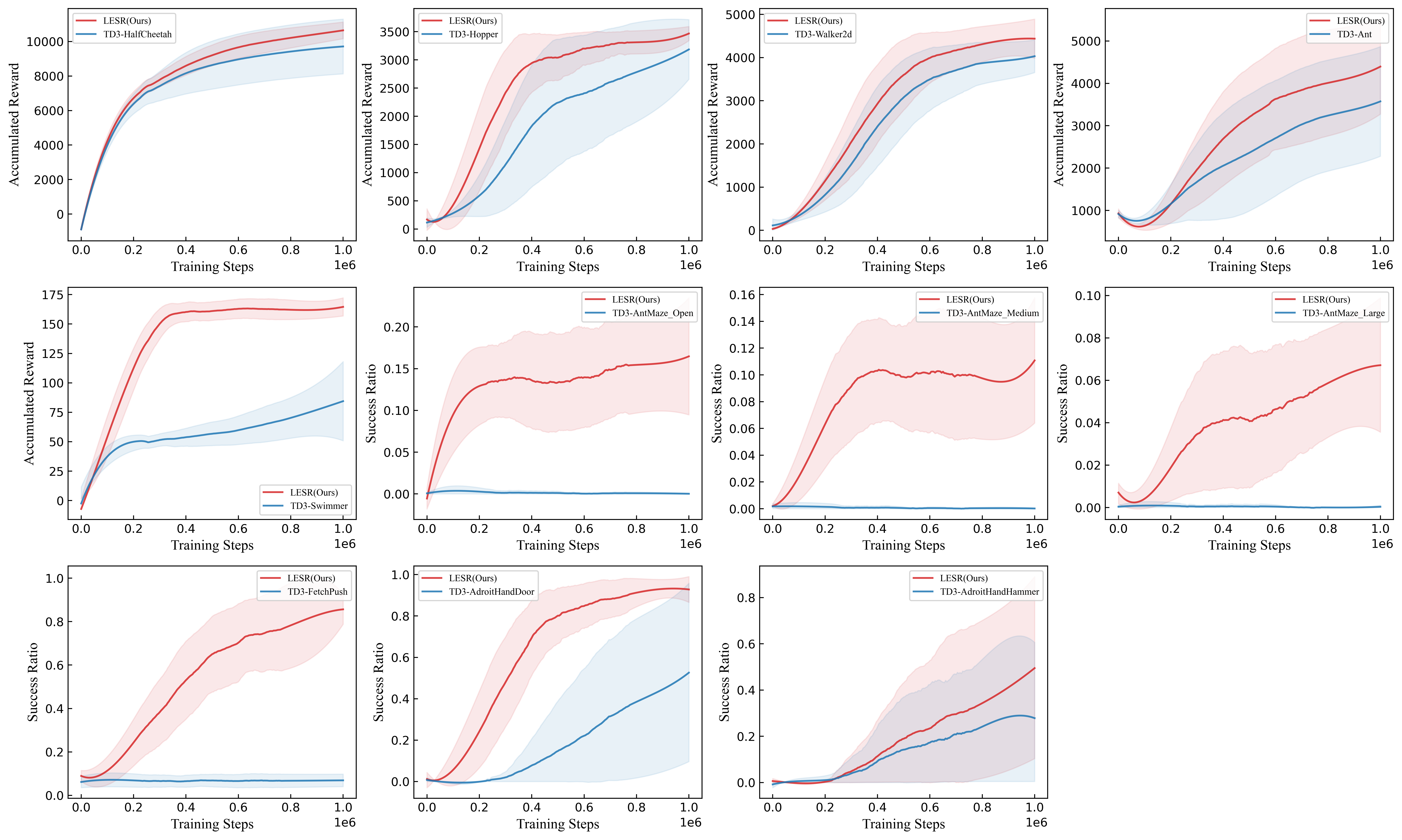}
	\caption{All tasks' final evaluating curves using the best state representation and intrinsic reward functions $\mathcal{F}_{best}, \mathcal{G}_{best}$ over 5 random seeds. We have smoothed the curves using avgol\_filter in `scipy'. }
	\label{fig:evaluate_all}
\end{figure} 

\begin{figure}[htbp]
	\centering
	\begin{subfigure}[t]{0.408\linewidth}
		\centering
		\includegraphics[width=\linewidth]{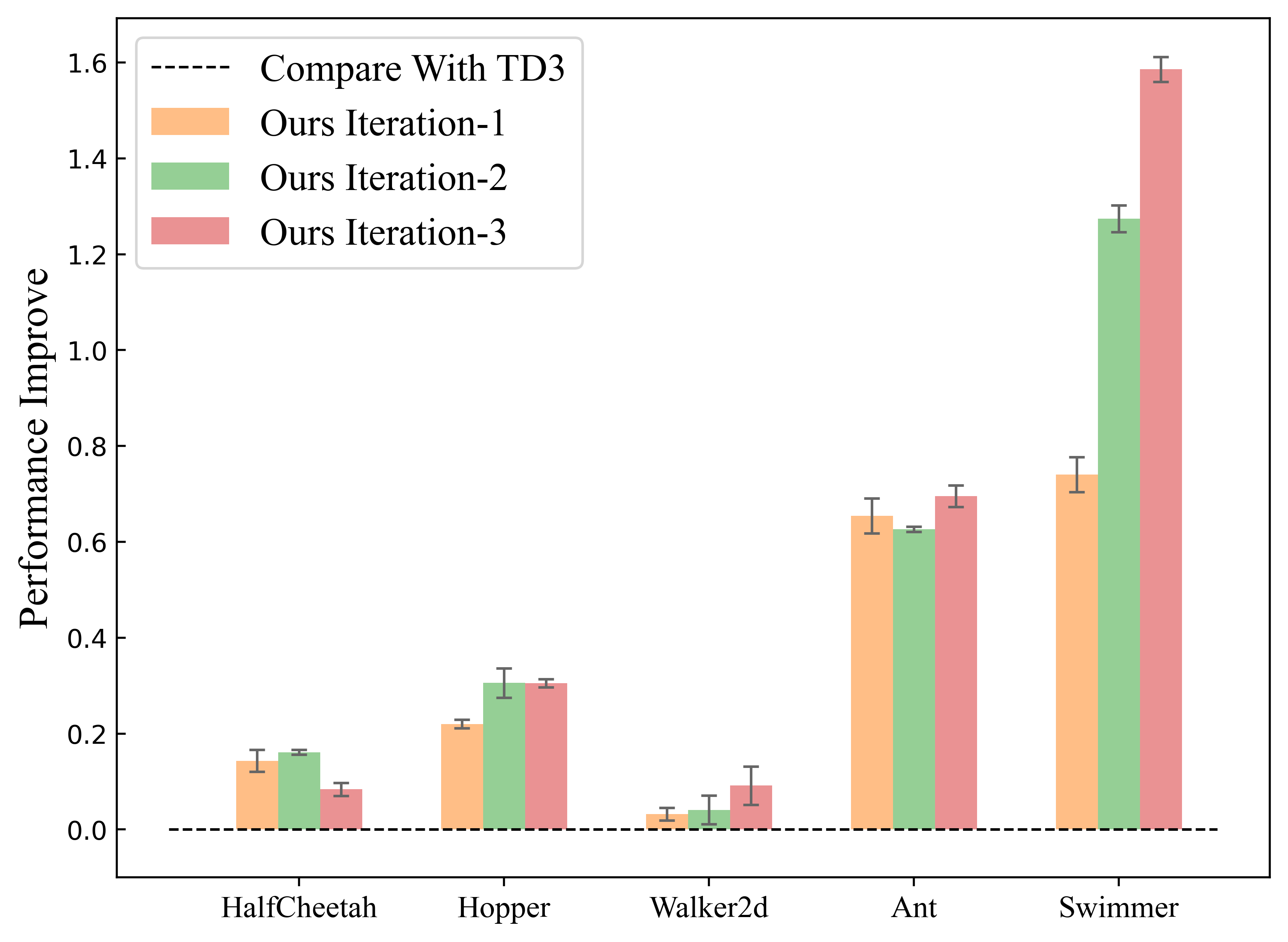}
		\caption{Mujoco Tasks}
	\end{subfigure}%
	\begin{subfigure}[t]{0.592\linewidth}
		\centering
		\includegraphics[width=\linewidth]{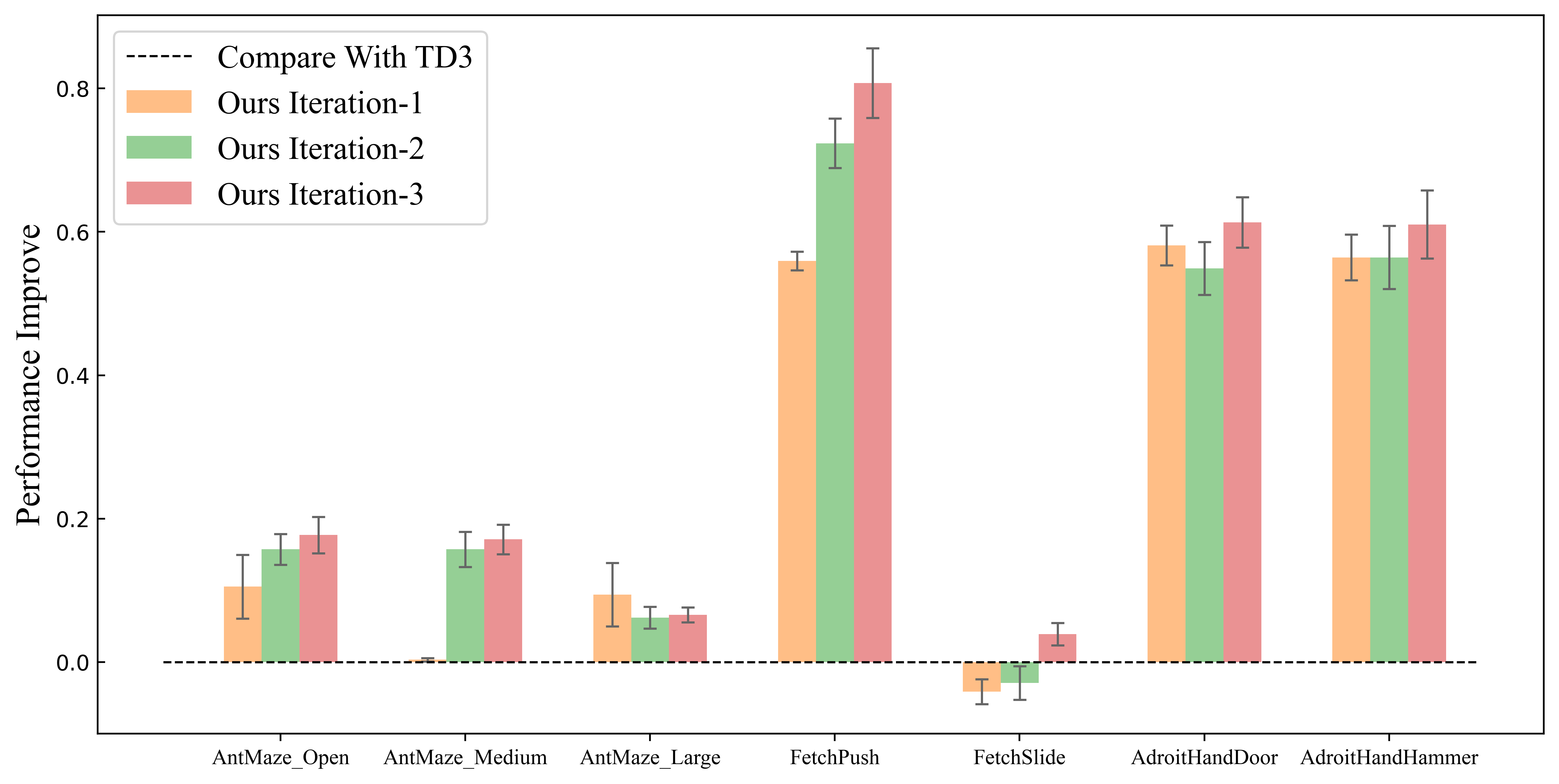}
		\caption{Gym-Robotics Tasks}
	\end{subfigure}%
	\caption{Performance improments compared with the baseline TD3 during every iteration. We use every iteration's best state representation and intrinsic reward functions $\mathcal{F}_{best}, \mathcal{G}_{best}$. Performance improments are calculated by $\frac{accumulated\_rewards - baseline}{baseline}$, while in Gym-Robotics, it is determined as $success\_rate - baseline$.}
	\label{fig:iteration}
\end{figure}

\begin{table*}[t]
	\caption{Final Performance on Mujoco and Gym-robotics environments over only \textbf{300k} environment interaction training steps. \textbf{Ours w/o IR:} without intrinsic reward. \textbf{Ours w/o SR:} without state representation. \textbf{Ours w/o FB:} without feedback. More details about the ablation study are elucidated in Section~\ref{sec:ablation}. \textbf{RPI} = \texttt{(accumulated\_rewards - baseline)/baseline}: relative performance improvement. \textbf{PI} = \texttt{success\_rate - baseline}: success rate improvement. This distinction arises because the performance is represented by the accumulative rewards in Mujoco, whereas in Gym-Robotics, it is measured by the success rate. The "mean$\pm$std" denotes values computed across five random seeds.}
	\label{tab:300k_run}
	\resizebox{\textwidth}{!}{
		\begin{tabular}{c|c|cc|cc|cc|cc|cc}
			\toprule
			\diagbox{\textbf{Environments}}{\textbf{Algorithm}} & \textbf{TD3} & \textbf{EUREKA} & \textbf{RPI} & \textbf{Ours w/o IR} & \textbf{RPI} & \textbf{Ours w/o SR} & \textbf{RPI} & \textbf{Ours w/o FB} & \textbf{RPI} &\textbf{LESR(Ours)} & \textbf{RPI} \\
			\midrule
			HalfCheetah & 7288.3$\pm$842.4 & 7299.1$\pm$344.2 & 0\% & 6985.7$\pm$921.8 & -4\% & 6924.5$\pm$483.9 & -5\% & 5391.6$\pm$3221.2 & -26\% & \textbf{7639.4$\pm$464.1} & \textbf{5\%} \\
			Hopper & 921.9$\pm$646.8 & 2132.1$\pm$0.0 & 131\% & 2509.0$\pm$619.8 & 172\% & 1638.0$\pm$1031.3 & 78\% & 1993.4$\pm$956.0 & 116\% & \textbf{2705.0$\pm$623.2} & \textbf{193\%} \\
			Walker2d & 1354.5$\pm$734.1 & 512.9$\pm$132.3 & -62\% & 1805.5$\pm$948.1 & 33\% & 962.1$\pm$388.4 & -29\% & 1270.2$\pm$659.5 & -6\% & \textbf{1874.8$\pm$718.2} & \textbf{38\%} \\
			Ant & 1665.2$\pm$895.5 & 1138.2$\pm$0.0 & -32\% & \textbf{2409.9$\pm$575.2} & \textbf{45\%} & 1967.9$\pm$1090.3 & 18\% & 2025.0$\pm$910.5 & 22\% & 1915.3$\pm$885.5 & 15\% \\
			Swimmer & 49.9$\pm$5.2 & 32.3$\pm$0.0 & -35\% & 57.5$\pm$4.2 & 15\% & 65.8$\pm$24.0 & 32\% & 51.3$\pm$5.0 & 3\% & \textbf{150.9$\pm$9.5} & \textbf{203\%} \\
			\textbf{Mujoco Relative Improve Mean} & - & - & 1\% &  - & 52\% &  - & 19\% &  - & 22\% &  - & \textbf{91\%} \\
			\midrule
			- & - & - & \textbf{PI} & - & \textbf{PI} &  - & \textbf{PI} &  - & \textbf{PI} &  - & \textbf{PI} \\
			\midrule
			AntMaze\_Open & 0.0$\pm$0.0 & 0.12$\pm$0.03 & 12\% & 0.0$\pm$0.0 & 0\% & \textbf{0.19$\pm$0.05} & \textbf{19\%} & 0.15$\pm$0.06 & 15\% & 0.15$\pm$0.06 & 15\% \\
			AntMaze\_Medium & 0.0$\pm$0.0 & 0.01$\pm$0.0 & 1\% & 0.0$\pm$0.0 & 0\% & 0.06$\pm$0.03 & 6\% & 0.01$\pm$0.01 & 1\% & \textbf{0.11$\pm$0.05} & \textbf{11\%} \\
			AntMaze\_Large & 0.0$\pm$0.0 & 0.0$\pm$0.0 & 0\% & 0.0$\pm$0.0 & 0\% & 0.03$\pm$0.02 & 3\% & 0.01$\pm$0.02 & 1\% & 0.04$\pm$0.03 & 4\% \\
			FetchPush & 0.06$\pm$0.03 & 0.08$\pm$0.03 & 2\% & 0.29$\pm$0.17 & 23\% & 0.06$\pm$0.03 & 0\% & 0.26$\pm$0.14 & 19\% & \textbf{0.41$\pm$0.15} & \textbf{35\%} \\
			AdroitHandDoor & 0.0$\pm$0.01 & 0.10$\pm$0.07 & 10\% & 0.12$\pm$0.2 & 12\% & 0.02$\pm$0.03 & 2\% & 0.04$\pm$0.07 & 4\% & \textbf{0.38$\pm$0.26} & \textbf{38\%} \\
			AdroitHandHammer & 0.02$\pm$0.02 & 0.04$\pm$0.05 & 2\% & 0.0$\pm$0.01 & -1\% & 0.0$\pm$0.01 & -1\% & \textbf{0.07$\pm$0.11} & \textbf{5\%} & 0.01$\pm$0.01 & -1\% \\
			\textbf{Gym-Robotics Improve Mean} & - & - & 5\% &  - & 6\% &  - & 5\% &  - & 8\% &  - & \textbf{17\%} \\
			\bottomrule
		\end{tabular} 
	}
\end{table*}

\section{Future Work}

\textbf{Methodology framework viability} Our methodology framework in LESR remains viable for image tasks, where Vision-Language Models (VLMs)\cite{radford2021learning, kim2021vilt} can be employed to extract semantic features from images, followed by further processing under the LESR framework. We anticipate utilizing VLMs for future research.

\textbf{General applicability beyond symbolic environments} While our primary focus lies on symbolic environments, our method extends beyond them. LESR serves as a general approach for leveraging large models to generate Empowered State Representations, offering potential applicability across various environments.

\textbf{Offline reinforcement learning} LESR is also feasible for offline reinforcement learning scenarios\cite{qu2024hokoff, shao2024counterfactual, mao2023supported, mao2024supported}. The LESR framework is versatile and not limited to online RL. In the future, we aim to explore various applications and possibilities.

Within LESR, we utilize LLMs to generate Empowered State Representations, showcasing their effectiveness in enhancing the Lipschitz continuity of value networks in reinforcement learning. Our experimental results, along with supplementary theorems, validate these advantages. We believe that LESR holds promise in inspiring future research endeavors.

\section{Hyperparameters}\label{appendix:hypers}

We have listed the hyperparameters of our algorithm in Table~\ref{table:hyper}, which encompasses the parameters of the training process, algorithm and optimizer settings. Notably, for PPO and SAC, we adopt the code in \url{https://github.com/Lizhi-sjtu/DRL-code-pytorch}. For EUREKA, to ensure comparative fairness, we utilized the same hyperparameters as LESR, namely the sample count $K$ and the iteration count $I$.

\begin{table}[htbp]
	\centering
	\caption{Hyperparameters for our algorithm. The values of hyperparameters for TD3\cite{fujimoto2018addressing} are derived from their original implementation.}
	\resizebox{\textwidth}{!}{
	\begin{tabular}{lcc}
		\toprule
		\textbf{Hyperparameters Name} & \textbf{Explaination} & \textbf{Value} \\
		\midrule
		$K$ & How many sample to generate during one iteration & 6\\
		$I$ & How many iterations & 3\\
		$N_{small}$ & Total training timesteps for iteration & 8e5\\
		$N$ & Final total training timesteps for $\mathcal{F}_{best}, \mathcal{G}_{best}$ & 1e6\\
		$w$ & weight of intrinsic reward & 0.02 for Mujoco tasks, 0.2 for Gym-Robotics tasks\\
		\bottomrule
	\end{tabular}%
	}
	\label{table:hyper}%
\end{table}%


\end{document}